\newif\iflong
\newtheorem{theorem}{Theorem}
\newtheorem{lemma}[theorem]{Lemma}
\newtheorem{corollary}[theorem]{Corollary}
\newtheorem{proposition}[theorem]{Proposition}
\newtheorem*{example}{Example}
\newcommand{\intrinsic}{inherent}
\newcommand{\extrinsic}{dedicated}
\begin{document}

\twocolumn[
\runningtitle{Sample Compression Unleashed}
\aistatstitle{Sample Compression Unleashed:\\ New Generalization Bounds for Real Valued Losses}

\aistatsauthor{ Mathieu Bazinet \And Valentina Zantedeschi \And  Pascal Germain }

\aistatsaddress{ Université Laval \And  ServiceNow Research, Université Laval \And Université Laval } ]

\begin{abstract}
The sample compression theory provides generalization guarantees for predictors that can be fully defined using a subset of the training dataset and a (short) message string, generally defined as a binary sequence. Previous works provided generalization bounds for the zero-one loss, which is restrictive notably when applied to deep learning approaches. In this paper, we present a general framework for deriving new sample compression bounds that hold for real-valued unbounded losses. Using the Pick-To-Learn (P2L) meta-algorithm, which transforms the training method of any machine-learning predictor to yield sample-compressed predictors, we empirically demonstrate the tightness of the bounds and their versatility by evaluating them on random forests and multiple types of neural networks. 
\end{abstract}

\section{INTRODUCTION}
Sample compression theory, introduced by \cite{littlestone1986relating}, is based on the fundamental idea that ``compressing implies learning'' \citep{david_supervised_2016}. If it is possible to provably show that a learned model can be completely defined by a subset of the training dataset, then sample compression theory gives us generalization guarantees. The most well-known learning algorithms that comply with the sample compression framework are the support vector machine (SVM) \citep{boser_svm} and the perceptron \citep{rosenblatt1958perceptron, moran2020perceptron}; the relevant training subset being formed by the support vectors in the former case, and the points causing an update of the predictor in the latter case. More recently, \cite{snyder_sample_2020} and \cite{paccagnan_pick_learn_2023} have introduced the first sample compression results for neural networks.

The sample compression theory is rich and multiple different approaches exist. For example, \cite{hanneke_agnostic_2018,attias2023optimal,attias2024sample, ben2024private, david_supervised_2016,floyd1995sample,  hanneke_stable_2021, hanneke_efficient_2018, hanneke_sample_2019, hanneke2024list,  moran_sample_2015, rubinstein2012geometric} propose theoretical results relating the VC dimension \citep{vcdim} and the compression analysis. By relating the probability of \emph{change of compression} to the true risk, \cite{campi2023compression,paccagnan_pick_learn_2023} express very tight guarantees for the consistent case, i.e., when the error on the training set is zero. Finally, \cite{marchand2002set, marchand2003set, graepel_pac-bayesian_2005,shah_margin-sparsity_2005,Laviolette2009LearningTS, marchand2005learning,hussain2007, shah_sample_2007} give computable risk certificates valid even in the non-consistent case.

In this paper, we build on the setting of \cite{shah_margin-sparsity_2005}.
Their sample-compression bound is based on the binomial test-set bound of \cite{langford2005tutorial}, which by definition is the tightest test-set bound for the zero-one loss under the sole \emph{i.i.d.}\ assumption. However, the use of the zero-one loss restricts its application to supervised classification problems. By leveraging proof techniques from the PAC-Bayesian literature, we extend the framework to real-valued losses and open the way to obtaining bounds directly for the cross-entropy loss \citep{perez2021tighter} and unbounded losses  \citep{haddouche2021pac, casado2024pac, rodriguez2024more}, for example under the sub-Gaussian assumption \citep{Kahane1960}. Finally, we train deep neural networks and random forests with Pick-To-Learn (P2L) \citep{paccagnan_pick_learn_2023}, a meta-algorithm that modifies the training loop of a model to yield a sample-compressed predictor, and assess the tightness of our bounds in different settings.

Of note, a major asset of our sample-compress bounds is that they do not depend on the number of learnable parameters. Two models of different sizes can achieve the same guarantees as long as they achieve the same empirical loss using the same amount of data. This lets us train large models such as DistilBERT \citep{sanh2019distilbert} and still achieve tight generalization bounds.

The paper is organized as follows. In \cref{sec:background}, we present the sample compression theory and the meta-algorithm Pick-To-Learn (P2L) \citep{paccagnan_pick_learn_2023}. In \cref{sec:main}, we first present \cref{thm:main_results}, a new general sample-compression theorem that holds for any real-valued losses. Leveraging the comparator functions of PAC-Bayes theory \citep{mcallester1998some}, we present two new sample compression bounds for losses in the interval unit, \cref{corr:catoni} and \cref{corr:kl_bound}, which respectively yield the tightest bound in theory and in practice. Then, we present \cref{corr:linear}, which holds for any unbounded losses, under the assumption that the moment-generating function is bounded. We finish this section by proving the tightness of our results over the previous state-of-the-art sample compression bound. Finally, in \cref{sec:experiments}, we empirically show the tightness of our bounds by training deep neural networks on image and text classification problems with P2L.
We adapt P2L to regression, train regression trees and forests with this modified algorithm and provide the first sample compression generalization bounds for tree-based regression predictors. 

\section{BACKGROUND AND NOTATION}\label{sec:background}

We are interested in the supervised learning framework. Let $(\bx_1, y_1), \ldots, (\bx_n, y_n)$ be a sequence of $n$ datapoints sampled \emph{i.i.d.}\
(independently and identically distributed) from an unknown distribution $\calD$ over $\R^d \times \calY$. The dataset $S = \{(\bx_i, y_i)\}_{i=1}^n$ is generated with the sequence of datapoints.\footnote{In this paper, we do not consider repeated datapoints. However, all definitions and results could be easily adapted to use multisets to account for repetitions, similarly to the work of \citet{campi2023compression}.} The targets are defined by the task at hand, with $\calY \in \{-1,+1\}$ for binary classification tasks and $\calY \subseteq \R$ for regression tasks.
 In this section, we focus on binary classification problems, but in \cref{sec:main}, we study both classification and regression settings.

Let $\calH$ be a family of predictors $h : \calX \to \calY$. Let $A : \bigcup_{k=1}^{\infty} (\calX \times \calY)^k \to \calH$ be a learning algorithm that takes a dataset $S$ and returns a predictor $A(S)$. We consider the zero-one loss function $\ell^{0\textrm-1}(h, \bx, y) = \indicator[h(\bx) \neq y]$, with $\indicator[a]=1$ if the predicate $a$ is true and $0$ otherwise. Then, the true risk of the hypothesis $h$ is defined as 
$$R_{\calD}(h) \ = \Prob_{(\bx, y) \sim \calD}(h(\bx) \neq y) \ = \E_{(\bx, y) \sim \calD} \indicator[h(\bx) \neq y]$$
and, for a realization $S \sim \calD^n$, its empirical risk is defined as $\widehat{R}_{S}(h) = \tfrac{1}{n}\sum_{i=1}^n \indicator[h(\bx_i) \neq y_i]$. 

Since the distribution $\calD$ is unknown, the true risk of a hypothesis cannot be computed. However, it can be upper bounded with high probability, using generalization bounds derived from statistical learning theories such as the sample compression theory. 
\subsection{Sample compression theory}\label{sec:marchand}
Let the predictor $h=A(S)$ be the output of a learning algorithm $A$ applied to a dataset $S$. In order to obtain guarantees on the generalization performance of $h$ using the sample compression theory, we need to be able to uniquely define $h$ as a function (the reconstruction function) of a subset of~$S$ (the compression set) and a complementary sequence of information (the message).

The compression set $S_{\bfi}$ is defined using a vector of indices $\bfi = \qty(i_1, i_2, \ldots, i_{\m})$, where the indices are ordered such that $1 \leq i_1 < i_2 < \ldots < i_{\m} \leq n$. The vector $\bfi$ belongs in the set of all possible vectors composed of the natural numbers 1 through $n$, denoted 
\begin{equation*}
    \scriptP(n) = \bigg\{\emptyset,\{1\}, \{2\}, \ldots, \{n\}, \{1,2\}, \ldots, \{1,n\},\ldots, \{1,2,\ldots, n\}\bigg\}.
\end{equation*}
Using this notation, $\bfi$ indicates the datapoints of $S$ that are present in $S_{\bfi}$ : 
\begin{equation*}
    S_{\bfi} = \left\{(\bx_{i_1},y_{i_1}), \ldots, (\bx_{i_{\m}}, y_{i_{\m}})\right\} \subseteq S\,.
\end{equation*}
Moreover, we define the complement vector $\bfi^c \in \scriptP(n)$ such that $S_{\bfi^c} = S \setminus S_{\bfi}$ and $|\mathbf{i}^c| = n-\m$.

The message $\sigma$ is chosen in a set $M(\bfi)$, which contains all relevant messages associated to the compression set $\bfi$. The message is a complementary source of information needed to reconstruct the predictor.

A predictor $h$ is called a sample-compressed predictor if there exists a vector $\bfi \in \scriptP(n)$ and (optionally) a message $\sigma \in M(\bfi)$ such that $h = \scriptR(S_{\bfi}, \sigma)$, where $\scriptR : \bigcup_{m \leq n}(\calX \times \calY)^m \times \bigcup_{\bfi \in \scriptP(n)} M(\bfi) \to \overline{\calH}$ is a data-independent deterministic reconstruction function and $\overline{\calH} \subseteq \calH$ is a discrete set of sample-compressed predictors.

In this paper, we distinguish two categories of reconstruction functions: \intrinsic{} and \extrinsic{}. An \intrinsic{} reconstruction function is used when a learning algorithm $A$ is its own reconstruction function. 
An algorithm $A$ is an inherent reconstruction function when, given a dataset $S$ and its compression set $S_{\bfi}$, the following equality holds
$A(S) = A(S_{\bfi})$. The most well-known example of an \intrinsic{} reconstruction function is the SVM \citep{boser_svm}. Other examples of \intrinsic{} reconstruction functions are the perceptron \citep{rosenblatt1958perceptron} and Pick-To-Learn \citep{paccagnan_pick_learn_2023}. On the other hand, \extrinsic{} reconstruction functions are used when an algorithm cannot be used to reconstruct the learned predictor from a compression set. The \extrinsic{} reconstruction function is different from the learning algorithm $A$ and is generally hand-crafted to suit $A$. The reconstruction function of the SCM \citep{marchand2002set} and all its iterations \citep[e.g.,][]{marchand2003set, marchand2005learning, shah_margin-sparsity_2005,kestler2006learning, hussain2007, Drouin_2019} are examples of \extrinsic{} reconstruction functions. 

We provide an example of a dedicated reconstruction function for a very simple predictor, the decision stump.
\begin{example}[\cite{shah2011feature}]
Given a datapoint $\bx' = (x'_1, \ldots, x'_d)$, a direction $\diamond \in \{-1, +1\}$ and an index $1\leq k \leq d$, the stump is defined $f_{(\bx', d, k)}(\bx) = \indicator\qty[\diamond \cdot (x_k - x'_k) > 0]$. To learn a decision stump over a dataset $S$, each combination of $\bx' \in S$, $\diamond \in \{-1, +1\}$ and $0\leq k \leq d$ is tested. Once the decision stump is learned, it is completely defined by the datapoint $\bx'$, the direction $\diamond$ and the index $k$. Our compression set is $S_{\bfi} = \{\bx'\}$ and the message is $\sigma = \{\diamond, k\}$.
With the compression set and the message, we can fully reconstruct the stump with $\scriptR(S_{\bfi}, \{\diamond, k\}) = f_{(S_{\bfi}, \diamond, k)}$.
\end{example}
Let $P_{\overline{\calH}}$ be a distribution over $\overline{\calH}$, such that $\sum_{h \in \overline{\calH}} P_{\overline{\calH}}(h) \leq 1$. As all sample-compressed predictors are uniquely defined using the index vector and the message, we choose the distribution $P_{\overline{\calH}}$ to be a product of two distributions $P_{\overline{\calH}}(\scriptR(S_{\bfi}, \sigma)) = P_{\scriptP(n)}(\bfi) P_{M(\bfi)}(\sigma)$, with $P_{\scriptP(n)}$ a distribution on $\scriptP(n)$ and $P_{M(\bfi)}$ a distribution on $M(\bfi)$. Following previous works \citep[e.g.][]{marchand2005learning}, we require the  distribution $P_{\calH}$ to be data-independent, in order to avoid further assumptions. Without any information on the data, we generally set $P_{M(\bfi)}$ to a uniform distribution. As for the distribution $P_{\scriptP(n)}$, it is usually set to penalize larger compression sets~\citep{shah_margin-sparsity_2005,marchand2002set,marchand2005learning}. For any size of compression set $\m$, there are $\smqty(n \\ \m)$ different possible compression sets. We set the distribution $P_{\scriptP(n)}(\bfi)$ to be $\smqty(n \\ \m)^{-1} \zeta(\m)$, with $\zeta(m) = \tfrac{6}{\pi^2}(m+1)^{-2}$. This choice is discussed by \cite{marchand2005learning}.

We now present the sample compression bound of \cite{shah_margin-sparsity_2005}. This result is derived using the binomial test-set bound of \cite{langford2005tutorial}, which by definition is the tightest test-set bound for the zero-one loss under the sole i.i.d.\ assumption.
\begin{theorem}[\citet{shah_margin-sparsity_2005}, Theorem 1]\label{thm:binom_tail}
    For any distribution $\calD$ over $\calX \times \calY$, for any family of set of messages $\{M(\bfi)\, | \bfi \in \scriptP(n)\}$, for any deterministic reconstruction function $\scriptR$ that outputs sample-compressed predictors $h \in \overline{\calH}$ and for any $\delta \in (0,1]$, with probability at least $1-\delta$ over the draw of $S \sim \calD^n$, we have
    \begin{align*}
        &\forall \bfi \in \scriptP(n), \sigma \in M(\bfi): \\
        &R_{\calD}(\scriptR(S_{\bfi}, \sigma)) \leq \overline{\emph{Bin}}\qty(\kappa,|\bfi^c|, \mqty(n \\ \m)^{-1} \zeta(\m) P_{M(\bfi)}(\sigma)\delta),
    \end{align*}
    with $\kappa = |\bfi^c|\widehat{R}_{S_{\bfi^c}}(\scriptR(S_{\bfi}, \sigma))$ and 
    \begin{equation*}
        \overline{\emph{Bin}}\qty(k,m, \delta) = \sup_{r \in [0,1]} \left\{\sum_{i=0}^k \mqty(m \\ i) r^i (1-r)^{m-i} \geq \delta \right\}.
    \end{equation*}
\end{theorem}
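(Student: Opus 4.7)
The plan is to apply the binomial test-set bound of \cite{langford2005tutorial} conditionally, one fixed pair $(\bfi,\sigma)$ at a time, and then perform a union bound over all such pairs using the data-independent prior $P_n(\bfi)P_{M(\bfi)}(\sigma)$ to split the confidence budget $\delta$. For the first step, fix $\bfi\in I$ and $\sigma\in M(\bfi)$: because $\scriptR$ is deterministic, once we condition on the datapoints at indices $\bfi$ the predictor $h:=\scriptR(S_{\bfi},\sigma)$ is fixed; since $S$ is \emph{i.i.d.}, the remaining $n-\m$ points in $S_{\bfi^c}$ are independent of $S_{\bfi}$ and are themselves \emph{i.i.d.}\ draws from $\calD$, so they form a bona fide test set for $h$. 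The number of errors made by $h$ on $S_{\bfi^c}$ is therefore a binomial random variable with success probability $R_{\calD}(h)$, and the Clopper--Pearson tail inversion encoded by $\overline{\text{Bin}}$ yields an upper-confidence bound on $R_{\calD}(h)$ that fails with probability at most $\delta_{\bfi,\sigma}$, for any value $\delta_{\bfi,\sigma}\in(0,1]$ we choose.

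For the second step, set $\delta_{\bfi,\sigma}:=\binom{n}{\m}^{-1}\zeta(\m)P_{M(\bfi)}(\sigma)\delta$ and union-bound the failure events over all pairs:
\begin{align*}
\sum_{\bfi\in I}\sum_{\sigma\in M(\bfi)}\delta_{\bfi,\sigma}
&\le \delta\sum_{m=0}^{n}\binom{n}{m}^{-1}\zeta(m)\sum_{\bfi:\,|\bfi|=m}\;\sum_{\sigma\in M(\bfi)}P_{M(\bfi)}(\sigma)\\
&\le \delta\sum_{m=0}^{n}\zeta(m)\;\le\;\delta,
\end{align*}
using that $\sum_{\sigma}P_{M(\bfi)}(\sigma)\le 1$ (a sub-probability), that there are exactly $\binom{n}{m}$ index vectors of size $m$ (cancelling the prior weight), and that $\sum_{m\ge 0}\zeta(m)=1$ by the choice $\zeta(m)=\tfrac{6}{\pi^2}(m+1)^{-2}$. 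Taking complements yields the claimed bound simultaneously over all $(\bfi,\sigma)$.

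The only conceptually delicate point is the conditional-independence argument in the first step: conditioning on $S_{\bfi}$ must pin down $h$ while leaving the marginal law of $S_{\bfi^c}$ unchanged. This is immediate from the \emph{i.i.d.}\ assumption together with the determinism of $\scriptR$ and, crucially, the data-independence of the prior $P_{\calH}$ (equivalently, of $P_n$ and the messages), all of which are standing hypotheses of the theorem. The remaining arithmetic is a textbook test-set-plus-union-bound argument; everything interesting has already happened in the compression scheme itself.
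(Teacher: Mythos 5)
Your proof is correct, and it mirrors the two-step decomposition the paper itself uses to prove its own Theorem~\ref{thm:main_results}: fix a pair $(\bfi,\sigma)$, exploit determinism of $\scriptR$ and the \emph{i.i.d.}\ assumption so that, conditionally on $S_{\bfi}$, the complement $S_{\bfi^c}$ is a fresh test set for a now-fixed predictor, apply a pointwise concentration bound, and then union-bound over all $(\bfi,\sigma)$ with the data-independent weights $\binom{n}{\m}^{-1}\zeta(\m)P_{M(\bfi)}(\sigma)$. The paper cites Theorem~\ref{thm:binom_tail} from \cite{shah_margin-sparsity_2005} without reproducing its proof, so there is no in-paper argument to compare against directly, but yours is the intended one and coincides structurally with the proof of Theorem~\ref{thm:main_results} in Appendix~\ref{app:proofs}, with the Clopper--Pearson tail inversion playing the role of the Chernoff step.

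One small discrepancy worth flagging: your conditional argument produces a binomial count over the $|\bfi^c|=n-\m$ test points, hence a bound of the form $\overline{\mathrm{Bin}}(\kappa,\,n-\m,\,\delta_{\bfi,\sigma})$, whereas the statement as transcribed here shows $n$ as the second argument of $\overline{\mathrm{Bin}}$. Your version is the right one --- with only $n-\m$ observations one cannot legitimately invert an $n$-trial binomial tail --- and Corollary~\ref{corr:binom_approx}, which approximates this theorem, consistently uses $n-\m$. You should make the number of trials explicit when invoking the tail inversion, both for rigor and because it exposes this apparent transcription slip in the stated theorem.
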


This theorem can be applied to any family of sample-compressed predictors, such as the support vector machine \citep{boser_svm}, the perceptron \citep{rosenblatt1958perceptron} and the set covering machine \citep{marchand2002set}. To apply this theorem to neural networks, one must design a reconstruction function outputting neural networks. To this end, \cite{snyder_sample_2020} propose to reparameterize a 2-layer LeakyReLU network in order to obtain ``support vectors'', which become the compression set of the reconstructed network. The following section presents a more general approach proposed by \cite{paccagnan_pick_learn_2023}.

\subsection{Pick-To-Learn}\label{sec:p2l}

Conceptualized by \cite{paccagnan_pick_learn_2023}, Pick-To-Learn (P2L) is a model-agnostic meta-algorithm that trains any model in such a way that it becomes a sample-compressed predictor. This algorithm is specifically designed for the generalization bound of \cite{campi2023compression}, which holds only for sample compressed predictors in the \emph{consistent case}, i.e., when $\widehat{R}_{S_{\bfi^c}}(\scriptR(S_{\bfi}, \sigma)) {=} 0$. 

To obtain sample-compressed predictors, P2L iteratively builds the compression set and trains the model on it. Starting with an initial predictor $h_0$, P2L tests the model on the whole dataset, picks the datapoint over which the model got the largest loss value, and adds it to the compression set. Then, using a learning algorithm~$A$, P2L trains the model on the newly created compression set. The previous steps are repeated until the model achieves zero errors on the training set $S_{\bfi^c}$ (excluding the compression set datapoints), which is equivalent to stopping when the cross-entropy loss ($\ell^{\textrm{x-e}}$) becomes smaller than $-\ln(0.5)$. We present P2L in Algorithm~\ref{alg:p2l}.
\SetKwInOut{Init}{Initialize}
\begin{algorithm}[!h]
\SetAlgoLined
\DontPrintSemicolon
\Init{$S_{\bfi} \leftarrow \emptyset $}
\Init{$h_{\bfi} \leftarrow h_0$}
\Init{$(\overline{\bx}, \overline{y}) \leftarrow \argmax_{(\bx,y) \in S} \ell^{\textrm{x-e}}(h_0, \bx, y)$}
\While{$-\ln(0.5) \leq \ell^{\emph{x-e}}(h_{\bfi}, \overline{\bx}, \overline{y})$}{
$S_{\bfi} \leftarrow S_{\bfi} \cup \{(\overline{\bx}, \overline{y})\}$\;
$h_{\bfi} \leftarrow A(S_{\bfi})$\;
$(\overline{\bx}, \overline{y}) \leftarrow \argmax_{(\bx,y) \in S_{\bfi^c}} \ell^{\textrm{x-e}}(h_{\bfi}, \bx, y)$
}
\Return $h_{\bfi}$
\caption{Pick-To-Learn (P2L)}\label{alg:p2l}
\end{algorithm}

Leveraging from the theoretical results of \cite{campi2023compression}, \cite{paccagnan_pick_learn_2023} derived a theorem specifically for the P2L algorithm.
\begin{theorem}[\citet{paccagnan_pick_learn_2023}, Theorem 4.2] \label{thm:p2l}
    Let $h_{\bfi} = \scriptR(S_{\bfi}, \emptyset)$ be the output of P2L. For any $\delta \in (0,1)$, with probability at least $1-\delta$ over the draw of $S \sim \calD^n$, we have
    \begin{equation*}
        R_{\calD}(h_{\bfi}) \ \leq \ \overline{\varepsilon}(\m, \delta)\,,
    \end{equation*}
where, for $k=0,1,\ldots, n-1$, $\overline{\varepsilon} (k,\delta)$ is the unique solution to the equation $\Psi_{k,\delta}(\varepsilon) = 1$ in the interval $[\frac{k}{n}, 1]$, %
with
\begin{align*}
    \Psi_{k,\delta}(\varepsilon) &= \frac{\delta}{2n} \hspace{1.5mm} \sum_{m=k}^{n-1} \ \ \hspace{-0.2mm} \frac{\smqty(m \\ k)}{\smqty(n \\ k)}(1-\varepsilon)^{-(n-m)}\\& + \frac{\delta}{6n} \sum_{m=n+1}^{4n} \frac{\smqty(m \\ k)}{\smqty(n \\ k)}(1-\varepsilon)^{-(n-m)}\,,
 \end{align*}
 and $\overline{\varepsilon}(n, \delta) = 1$.
\end{theorem}

Note that the value of the previous bound is completely determined by $|\bfi|$, the size of the compression set. The faster P2L obtains zero errors (in terms of the number of iterations performed by Algorithm~\ref{alg:p2l}), the better the bound.

\section{A GENERAL SAMPLE-COMPRESSION BOUND}\label{sec:main}
Let $\calH$ be a family of predictors $h : \calX \to \overline{\calY}$, where $\overline{\calY} \,{\supseteq} \calY$ is a convex hull of $\calY$. For example, $[-1,1]$ is the convex hull of $\{-1,+1\}$. We consider a loss function $\ell : \calH \times \calX \times \calY \to \R$. Then, the true risk of the hypothesis $h$ is defined as $\calL_{\calD}(h) = \E_{(\bx, y) \sim \calD} \ell(h, \bx, y)$ and, for a realization $S \sim \calD^n$, its empirical risk is defined as $\hatL_{S}(h) = \tfrac{1}{n}\sum_{i=1}^n \ell(h, \bx_i, y_i)$. This setting is a generalization of the setting of \cref{sec:background}. As \cref{thm:binom_tail} only holds for the zero-one loss, we need new results to extend the sample-compression theory to this setting.

To extend the work of \citet{shah_margin-sparsity_2005} to real-valued losses, we introduce a \emph{comparator function} $\Delta : \R \times \R \to \R$ and provide a new result inspired by the general PAC-Bayes bound \citep{germain2009pac}. \cref{thm:main_results} presents a new general sample-compress bound that holds for any real-valued losses, extending the applicability of the sample-compression theory. The theorem is followed by a proof sketch highlighting the main steps, and the full proof is given in \cref{app:proofs}. 
\begin{restatable}{theorem}{mainresult}\label{thm:main_results}
For any distribution $\calD$ over $\calX \times \calY$, for any family of set of messages $\{M(\bfi)\, | \bfi \in \scriptP(n)\}$, for any deterministic reconstruction function $\scriptR$ that outputs sample-compressed predictors $h \in \overline{\calH}$, for any loss $\ell: \calH \times \calX \times \calY \to \R$, for any comparator function $\Delta : \R\times \R \to \R$ and for any $\delta \in (0,1]$, with probability at least $1-\delta$ over the draw of $S \sim \calD^n$, we have 
\iflong
 \begin{align*}
&\forall \mathbf{i} \in \scriptP(n), \sigma \in M(\bfi): \Delta\qty(\hatL_{S_{\bfi^c}}(\scriptR(S_{\bfi},\sigma)), \mathcal{L}_{\calD}(\scriptR(S_{\bfi},\sigma))) \leq \frac{1}{|\bfi^c|}\qty[\log \mqty(n \\ \m) + \log\qty(\frac{\mathcal{E}_{\Delta}\qty(\mathbf{i},\sigma)}{\zeta(\m)P_{M(\bfi)}(\sigma)\delta})]\,,
\end{align*}
\else
 \begin{align*}
&\forall \mathbf{i} \in \scriptP(n), \sigma \in M(\bfi): \\
&\Delta\qty(\hatL_{S_{\bfi^c}}(\scriptR(S_{\bfi},\sigma)), \mathcal{L}_{\calD}(\scriptR(S_{\bfi},\sigma)))\\
&~~~~\leq \frac{1}{|\bfi^c|}\qty[\log \mqty(n \\ \m) + \log\qty(\frac{\mathcal{E}_{\Delta}\qty(\mathbf{i},\sigma)}{\zeta(\m)P_{M(\bfi)}(\sigma)\delta})]\,,
\end{align*}\fi
with 
\begin{equation*}
    \Ecal_{\Delta}(\bfi, \sigma) = \E_{T_{\bfi} \sim \calD^{\m}} \E_{T_{\bfi^c} \sim \calD^{|\mathbf{i}^c|}} e^{|\mathbf{i}^c|\Delta\qty(\hatL_{T_{\bfi^c}}(\scriptR(T_{\bfi}, \sigma)), \calL_{\calD}(\scriptR(T_{\bfi}, \sigma)))}.
\end{equation*}
\end{restatable}
\begin{proof}[Proof Sketch]
    For all $\mathbf{i} \in \scriptP(n)$, $ \sigma \in M(\bfi)$, $\epsilon > 0$, using Chernoff's bound with $t>0$, we have
    \begin{align*}
        &\Prob_{S \sim \calD^n}\qty(\Delta\qty(\hatL_{S_{\bfi^c}}(\scriptR(S_{\bfi},\sigma)), \mathcal{L}_{\calD}(\scriptR(S_{\bfi},\sigma))) > \epsilon)  \numberthis \label{eq:chernoff_epsilon}\\
        &\leq e^{-t\epsilon} \E_{S \sim \calD^n} e^{t\Delta\qty(\hatL_{S_{\bfi^c}}(\scriptR(S_{\bfi},\sigma)), \mathcal{L}_{\calD}(\scriptR(S_{\bfi},\sigma)))}\\
        &= e^{-t\epsilon} \E_{S_{\bfi} \sim \calD^{\m}} \E_{S_{\bfi^c} \sim \calD^{|\mathbf{i}^c|}}e^{t\Delta\qty(\hatL_{S_{\bfi^c}}(\scriptR(S_{\bfi},\sigma)), \mathcal{L}_{\calD}(\scriptR(S_{\bfi},\sigma)))}
    \end{align*}
where the last equality requires \emph{i.i.d.} datapoints. For any $\delta_{\bfi}^{\sigma} \in (0,1]$, we define
\begin{equation}
    \delta_{\bfi}^{\sigma} = e^{-t\epsilon} \hspace{-2mm}\E_{S_{\bfi} \sim \calD^{\m}} \E_{S_{\bfi^c} \sim \calD^{|\mathbf{i}^c|}} \hspace{-2mm}e^{t\Delta\qty(\hatL_{S_{\bfi^c}}(\scriptR(S_{\bfi},\sigma)), \mathcal{L}_{\calD}(\scriptR(S_{\bfi},\sigma)))}\label{eq:delta_i_cgf}
\end{equation}
and solve for $\epsilon$, using $t=|\mathbf{i}^c|$. 
The obtained solution is used to replace the $\epsilon$ in \cref{eq:chernoff_epsilon}, which gives a bound valid with probability~$\delta_{\bfi}^{\sigma}$ for every single predictor $\scriptR(S_{\bfi},\sigma)$.
By setting $\delta_{\bfi}^{\sigma} = P_{\scriptP(n)}(\bfi)P_{M(\bfi)}(\sigma)\delta$ and applying a union bound over all $\mathbf{i} \in \scriptP(n)$,  $\sigma \in M(\bfi)$, the final result holds uniformly with probability $\delta$ for all predictors outputted by $\scriptR$.
\end{proof}
\Cref{thm:main_results} holds for any comparator function $\Delta$ such that $\mathcal{E}_{\Delta}$ is finite for any pair $(\bfi, \sigma)$. Although bounding $\Ecal_{\Delta}$ can be challenging, it was extensively studied for convex functions in PAC-Bayesian theory \citep[e.g.,][]{mcallester1998some, maurer2004note, casado2024pac, hellstrom2024comparing}. 
We leverage this theory and present novel corollaries for the three most well-known comparators.

First of all, we present a bound using the comparator $\Delta_C(q,p) = -\ln\qty(1-p(1-e^{-C})) - Cq$. The family of bounds $\{\Delta_C : C > 0\}$ is commonly referred to as ``Catoni bounds'' \citep{catoni2007pac} in the PAC-Bayes literature. 

\begin{restatable}{corollary}{catoni}\label{corr:catoni}
    In the setting of \cref{thm:main_results}, for any $C>0$, for any loss function $\ell : \calH \times \calX \times \calY \to [0,1]$, with probability at least $1-\delta$ over the draw of $S \sim \calD^n$, we have
\begin{align*}
&\forall \bfi \in \scriptP(n),\sigma \in M(\bfi):\,\mathcal{L}_{\calD}(\scriptR(S_{\bfi},\sigma)) \leq \frac{1-\exp(-\epsilon_C(\bfi, \sigma, \delta))}{1-e^{-C}},
\end{align*}
with 
\iflong
\begin{align*}
    &\epsilon_C(\bfi, \sigma, \delta) = C\hatL_{S_{\bfi^c}}(\scriptR(S_{\bfi},\sigma)) + \frac{1}{n-\m}\qty[\log\mqty(n \\ \m) +  \log\qty(\frac{1}{\zeta(\m)P_{M(\bfi)}(\sigma)\delta})].
\end{align*}
\else
\begin{align*}
    &\epsilon_C(\bfi, \sigma, \delta) = C\hatL_{S_{\bfi^c}}(\scriptR(S_{\bfi},\sigma)) \\
    &~~~~~~+ \frac{1}{n-\m}\qty[\log\mqty(n \\ \m) +  \log\qty(\frac{1}{\zeta(\m)P_{M(\bfi)}(\sigma)\delta})].
\end{align*}\fi
\end{restatable}

For $0\leq q,p \leq 1$, there exists $C^* = \argsup_{C>0} \Delta_C(q,p)$ such that $\Delta_{C^*}$ gives the tightest PAC-Bayesian bounds \citep{foong2021tight}. This result also holds true for \cref{thm:main_results}, when restricted to proper, convex and lower semicontinuous comparator functions $\Delta : [0,1]\times [0,1] \to \R$. Unfortunately, the $\Delta_C$ bound hold for only one value of $C$, chosen prior to seeing $S$. With a union bound argument, we can consider multiple parameters $C$ simultaneously, but there is no guarantee that $C^*$ is in this set. To circumvent this problem, we can use the binary Kullback-Leibler divergence comparator function 
    $\kl(q,p) = q\ln\frac{q}{p} + (1-q)\ln\frac{1-q}{1-p},$
which is equivalent to $\Delta_{C^*}(q,p)$, as per the following proposition.
\begin{proposition}[\citet{germain2009pac}, Proposition~2.1]\label{prop:kl_catoni_germain}
    For any $0 \leq q \leq p < 1$, we have 
    $%
        \sup_{C \geq 0} \Delta_C(q,p) = \kl(q,p).
    $%
\end{proposition}

In practice, even with the term $1 = \Ecal_{\Delta_C}(\bfi, \sigma) \leq \Ecal_{\kl}(\bfi, \sigma) = 2\sqrt{n-\m}$, the $\kl$ bound stated below (\cref{corr:kl_bound}) usually yield tighter bounds than the $\Delta_C$ bound (\cref{corr:catoni}), as the optimal value $C^*$ is unlikely to be selected before computing the bound. Moreover, the $\kl$ is known to be optimal for $[0,1]$-valued losses, as per the results of \cite{hellstrom2024comparing}.

\begin{restatable}{corollary}{klbound}\label{corr:kl_bound}
In the setting of \cref{thm:main_results}, for any loss function $\ell : \calH \times \calX \times \calY \to [0,1]$, with probability at least $1-\delta$ over the draw of $S \sim \calD^n$, we have
\iflong
\begin{align*}
&\forall \mathbf{i} \in \scriptP(n), \sigma \in M(\bfi): \mathcal{L}_{\calD}(\scriptR(S_{\bfi},\sigma)) \leq \kl^{-1}\qty(\hatL_{S_{\bfi^c}}(\scriptR(S_{\bfi},\sigma)), \epsilon_{\kl}(\bfi, \sigma, \delta)),
\end{align*}
\else
\begin{align*}
&\forall \mathbf{i} \in \scriptP(n), \sigma \in M(\bfi): \\
& \mathcal{L}_{\calD}(\scriptR(S_{\bfi},\sigma)) \leq \kl^{-1}\qty(\hatL_{S_{\bfi^c}}(\scriptR(S_{\bfi},\sigma)), \epsilon_{\kl}(\bfi, \sigma, \delta)),
\end{align*}\fi
with 
\ $%
    \kl^{-1}\qty(q, \epsilon) = \argsup_{0\leq p \leq 1} \left\{\kl(q,p) \leq \epsilon \right\}
\,$ \ %
and 
\begin{equation*}
    \epsilon_{\kl}(\bfi, \sigma, \delta) = \frac{1}{n-\m} \left[\log \mqty(n \\ \m) + \log\qty(\frac{2\sqrt{n-\m}}{\zeta(\m)P_{M(\bfi)}(\sigma)\delta})\right].
\end{equation*}
\end{restatable}
Both \cref{corr:catoni} and \cref{corr:kl_bound} hold for losses bounded in $[0,1]$. Using the linear function $\Delta_{\lambda}(q,p) = \lambda (p-q)$, we can extend this sample compression framework to unbounded losses provided that $\Ecal_{\Delta_{\lambda}}$ is bounded. As an example, we present a result for sub-Gaussian losses \citep{Kahane1960}.

\begin{restatable}{corollary}{linearloss}\label{corr:linear}
In the setting of \cref{thm:main_results}, for any $\lambda>0$, with a $\varsigma^2$-sub-Gaussian loss function $\ell : \calH \times \calX \times \calY \to \R$, with probability at least $1-\delta$ over the draw of $S \sim \calD^n$, we have
\iflong
\begin{align*}
&\forall \mathbf{i} \in \scriptP(n), \sigma \in M(\bfi): \mathcal{L}_{\calD}(\scriptR(S_{\bfi},\sigma)) \leq \hatL_{S_{\bfi^c}}(\scriptR(S_{\bfi},\sigma)) + \frac{\lambda \varsigma^2}{2}+ \frac{1}{\lambda(n-\m)}\qty[\log \mqty(n \\ \m) + \log\qty(\frac{1}{\zeta(\m)P_{M(\bfi)}(\sigma)\delta})].
\end{align*}
\else
\begin{align*}
&\forall \mathbf{i} \in \scriptP(n), \sigma \in M(\bfi): \\
&\mathcal{L}_{\calD}(\scriptR(S_{\bfi},\sigma)) \leq \hatL_{S_{\bfi^c}}(\scriptR(S_{\bfi},\sigma)) + \frac{\lambda \varsigma^2}{2}\\
&~~+ \frac{1}{\lambda(n-\m)}\qty[\log \mqty(n \\ \m) + \log\qty(\frac{1}{\zeta(\m)P_{M(\bfi)}(\sigma)\delta})].
\end{align*}\fi
\end{restatable}
Note that this result encompasses bounded losses with a range of $[a,b]$, as they are sub-Gaussian with~$\varsigma = \frac{b-a}{2}$.
It can be extended to the hypothesis-dependent range condition of \cite{haddouche2021pac}, any unbounded losses under model-dependent assumptions \citep{casado2024pac} or more general tail behaviors \citep{rodriguez2024more}. 
\begin{table*}[!h]
\caption{ Results for the CNNs trained using P2L on the binary MNIST problems. The results displayed obtained the tightest P2L bound. All metrics presented are in percent (\%).} \label{tab:binary_mnist_cnn}
\begin{center}
\resizebox{2\columnwidth}{!}{%
\begin{tabular}{cccccccc} \toprule
Dataset & Validation error &	Test error&	$\kl$ bound	&Binomial bound	&P2L bound	&$\m/n$&	Baseline test error \\
\midrule
MNIST08 &	0.33$\pm$0.17 &	0.25$\pm$0.10	& 5.05$\pm$0.16 &	5.00$\pm$0.16	& 1.04$\pm$0.04 &	0.87$\pm$0.03 & 0.22$\pm$0.05\\
MNIST17&	0.20$\pm$0.08	& 0.38$\pm$0.16 &	4.33$\pm$0.21	& 4.29$\pm$0.21 &	0.86$\pm$0.05 &	0.72$\pm$0.04 &	0.17$\pm$0.08 \\ 
MNIST23&	0.39$\pm$0.12	& 0.27$\pm$0.10 &	8.20$\pm$0.34	& 8.15$\pm$0.34 &	1.86$\pm$0.09	& 1.61$\pm$0.09&	0.16$\pm$0.05 \\ 
MNIST49&	0.82$\pm$0.11	& 0.77$\pm$0.17 &	10.52$\pm$0.37 &	10.47$\pm$0.37 &	2.53$\pm$0.11 &2.23$\pm$0.10 &	0.44$\pm$0.08 \\ 
MNIST56&	0.46$\pm$0.12	& 0.47$\pm$0.15 &	6.29$\pm$0.22	& 6.24$\pm$0.22 &	1.35$\pm$0.06	& 1.15$\pm$0.05 &	0.30$\pm$0.05 \\
\bottomrule
\end{tabular}
}
\end{center}
\end{table*}
\begin{table*}[!h]
\caption{ Results for the CNNs trained using P2L on the binary MNIST problems and stopped at the iteration with the minimum $\kl$ bound. The results displayed obtained the tightest $\kl$ bound. %
Metrics are in percents (\%).
} \label{tab:binary_mnist_cnn_early_stopped}
\begin{center}
\resizebox{2\columnwidth}{!}{%
\begin{tabular}{cccccccc}
\toprule
Dataset & Validation error &	Test error&	$\kl$ bound	&Binomial bound	& Train error & $\m/n$&	Baseline test error \\
\midrule 
MNIST08  & 0.49$\pm$0.39 & 0.49$\pm$0.26 & 4.71$\pm$0.25 & 5.33$\pm$0.62 & 0.24$\pm$0.23 & 0.62$\pm$0.14 &0.22$\pm$0.05	\\
MNIST17& 0.45$\pm$0.18 & 0.48$\pm$0.11 & 3.70$\pm$0.21 & 4.37$\pm$0.11 & 0.23$\pm$0.08 & 0.43$\pm$0.08	& 0.17$\pm$0.08 \\ 
MNIST23 & 0.74$\pm$0.28 & 0.84$\pm$0.21 & 6.56$\pm$0.38 & 8.09$\pm$0.64 & 0.64$\pm$0.32& 0.77$\pm$0.20	& 0.16$\pm$0.05 \\ 
MNIST49&  1.16$\pm$0.31 & 1.13$\pm$0.24 & 8.60$\pm$0.46 & 9.61$\pm$0.68 & 0.51$\pm$0.28 & 1.26$\pm$0.23 & 0.44$\pm$0.08  \\ 
MNIST56& 0.94$\pm$0.09 & 0.70$\pm$0.20 & 5.42$\pm$0.31 & 6.49$\pm$0.81 & 0.43$\pm$0.23 & 0.65$\pm$0.10	&  0.30$\pm$0.05\\
\bottomrule
\end{tabular}
}
\end{center}
\vspace{-0.3cm}
\end{table*}
\subsection{Behavior in the consistent case}
In this section, we present new theoretical results that justify the tightness of the bounds observed in \cref{sec:experiments}, in which we train different types of models with P2L. By construction, P2L is designed to stop when the complement error $\widehat{R}_{S_{\bfi^c}}(\scriptR(S_{\bfi},\sigma))$ is zero. In this consistent setting, where the predictor always finishes training with zero errors, we can prove that Corollaries~\ref{corr:catoni} and~\ref{corr:kl_bound} are tight upper bounds of \cref{thm:binom_tail}. 

The first result, presented in \cref{thm:consistent_case_catoni}, states that the $\Delta_C$ bound is an arbitrarily tight upper bound of the binomial tail inversion bound of \citet{shah_margin-sparsity_2005}. Indeed, in the following theorem, we show that \cref{corr:catoni} is minimized by $C$ tending to $\infty$ and is equal to \cref{thm:binom_tail} in the limit of $C\to \infty$.

\begin{restatable}{theorem}{consistentcasecatoni}\label{thm:consistent_case_catoni}
In the consistent case, i.e.\ when $\widehat{R}_{S_{\bfi^c}}(\scriptR(S_{\bfi},\sigma)) = 0$, \cref{corr:catoni} is arbitrarily close to the binomial tail inversion of \cref{thm:binom_tail}. Indeed, we have
\begin{align}
    \overline{\rm{Bin}}\qty(0,|\bfi^c|, \delta_{\bfi}^{\sigma}) \ &= \inf_{C > 0}\left\{ \frac{1-\exp\qty(-\frac{1}{|\bfi^c|} \ln \frac{1}{\delta_{\bfi}^{\sigma}})}{1-e^{-C}} \right\} \label{eq:inf_bin_catoni}\\
    &=\lim_{C \to \infty}\left\{ \frac{1-\exp\qty(-\frac{1}{|\bfi^c|} \ln \frac{1}{\delta_{\bfi}^{\sigma}})}{1-e^{-C}} \right\} \label{eq:lim_bin_catoni}
\end{align}
with $\delta_{\bfi}^{\sigma} = \smqty(n \\ \m)^{-1} \zeta (\m)P_{M(\bfi)}(\sigma) \delta$. 
\end{restatable}

The following theorem states that the Kullback-Leibler divergence bound is a tight upper bound of the binomial tail inversion bound, up to a constant. 

\begin{restatable}{theorem}{consistentcasekl}\label{thm:consistent_case_kl}
In the consistent case, i.e.\ when $\widehat{R}_{S_{\bfi^c}}(\scriptR(S_{\bfi},\sigma)) = 0$, \cref{corr:kl_bound} is a tight upper bound of \cref{thm:binom_tail} up to a constant $K(m, \delta)$. Indeed, we have
\begin{align}
    \overline{\rm{Bin}}\qty(0,|\bfi^c|, \delta_{\bfi}^{\sigma})\  \leq &\  \kl^{-1}\qty(0, \frac{1}{|\bfi^c|} \ln \frac{2\sqrt{|\bfi^c|}}{\delta_{\bfi}^{\sigma}}) \label{eq:loose_kl}\\
=&\  \overline{\rm{Bin}}\qty(0,|\bfi^c|, \delta_{\bfi}^{\sigma})+ K(|\bfi^c|,\delta_{\bfi}^{\sigma})\,,\label{eq:constant_kl}
\end{align}
with 
$K(m,\delta) \ = \ \exp\qty(-\tfrac{1}{m} \ln \tfrac{1}{\delta}) - \exp\qty(-\tfrac{1}{m} \ln \tfrac{2\sqrt{m}}{\delta})$ and $\delta_{\bfi}^{\sigma} = \smqty(n \\ \m)^{-1} \zeta (\m)P_{M(\bfi)}(\sigma) \delta$.
\end{restatable}
The proofs of both Theorems \ref{thm:consistent_case_catoni} and \ref{thm:consistent_case_kl} can be found in \cref{app:consistent_case}.

Note that the constant $K(m,\delta)$ of Equation~\eqref{eq:constant_kl} tends to $0$ when $m$ tends to $\infty$ and is bounded by
\begin{equation*}
    0\ \leq\  K(m,\delta) \ \leq\  \frac{1}{m}\ln \frac{2\sqrt{m}}{\delta}.
\end{equation*}
With $\delta =0.01$, $K(m,\delta)$ is maximized at $m\approx7.35$, with $K(7.35, 0.01) \approx 0.11$. However, as depicted by the empirical results of Section~\ref{sec:experiments}, $\delta_{\bfi}^{\sigma}$ is orders of magnitude smaller than $0.01$ in practical situations.\footnote{Given a dataset of $10597$ datapoints, a compression set of size $92$ and $\delta=0.01$, we have $K(10505,10^{-234})\approx 0.0005$. This is indeed the difference that we observe in \cref{tab:binary_mnist_cnn} between the $\kl$ bound and the binomial bound for MNIST08.}

\section{EXPERIMENTS}\label{sec:experiments}
In this section, we show the versatility of our results by training different models using the P2L algorithm.\footnote{Our code is available at \url{https://github.com/GRAAL-Research/pick-to-learn}.}
In \cref{sec:binary_mnist}, we train neural networks on binary classification problems and compare our new results to the pre-existing sample compression ones. We empirically validate that our bounds are almost as tight as the binomial bound, all the while not suffering from the numerical optimization problem of \cref{thm:binom_tail} and being defined in the inconsistent case, where the P2L bound of \cref{thm:p2l} is undefined. In \cref{sec:mnist}, we train CNNs on the MNIST dataset and present generalization bounds on the (bounded) cross-entropy loss. As no previous sample-compression bound is defined for real-valued losses, we compare our result to a PAC-Bayesian theorem. In \cref{sec:regression}, we use P2L to train tree-based models on regression datasets and give generalization bounds on the root mean squared error (RMSE), an unbounded loss function, under the assumption that it is sub-Gaussian. Finally, in \cref{sec:amazon}, we fine-tune DistilBERT, a 66M parameters language model, on a review polarity classification problem. We obtain tight bounds simultaneously on the zero-one loss and the cross-entropy loss, demonstrating that our new theorem is independent of the number of parameters of the model.

Each experiment is run five times with different seeds. We report the mean and standard deviation of the metrics over these five repetitions. The datasets are separated into three parts: the training, validation and test set. When a dataset doesn't have a built-in test set, we create it using 10\% of the samples. Of the remaining samples, 10\% are used for the validation set size and 90\% for the training set. When computing the bounds, we use $\delta=0.01$. 
The hyperparameters used for the experiments can be found in \cref{app:experiments}. 
\begin{figure*}[t!] 
    \centering
    \begin{subfigure}[t]{0.5\textwidth}
        \centering
    \includegraphics[width=\textwidth]{./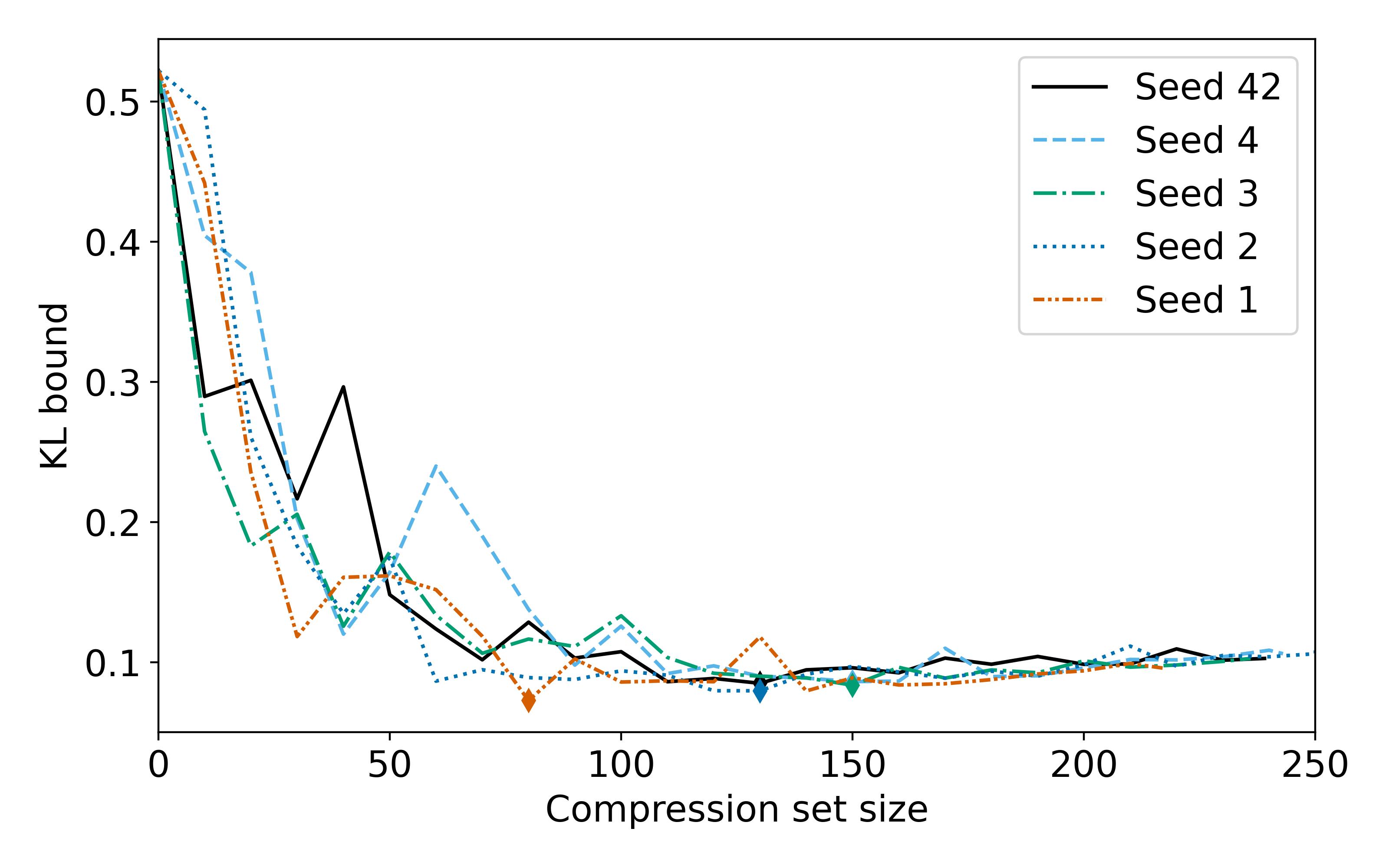}
        \caption{MNIST49}
    \end{subfigure}%
    ~ 
    \begin{subfigure}[t]{0.5\textwidth}
        \centering
        \includegraphics[width=\textwidth]{./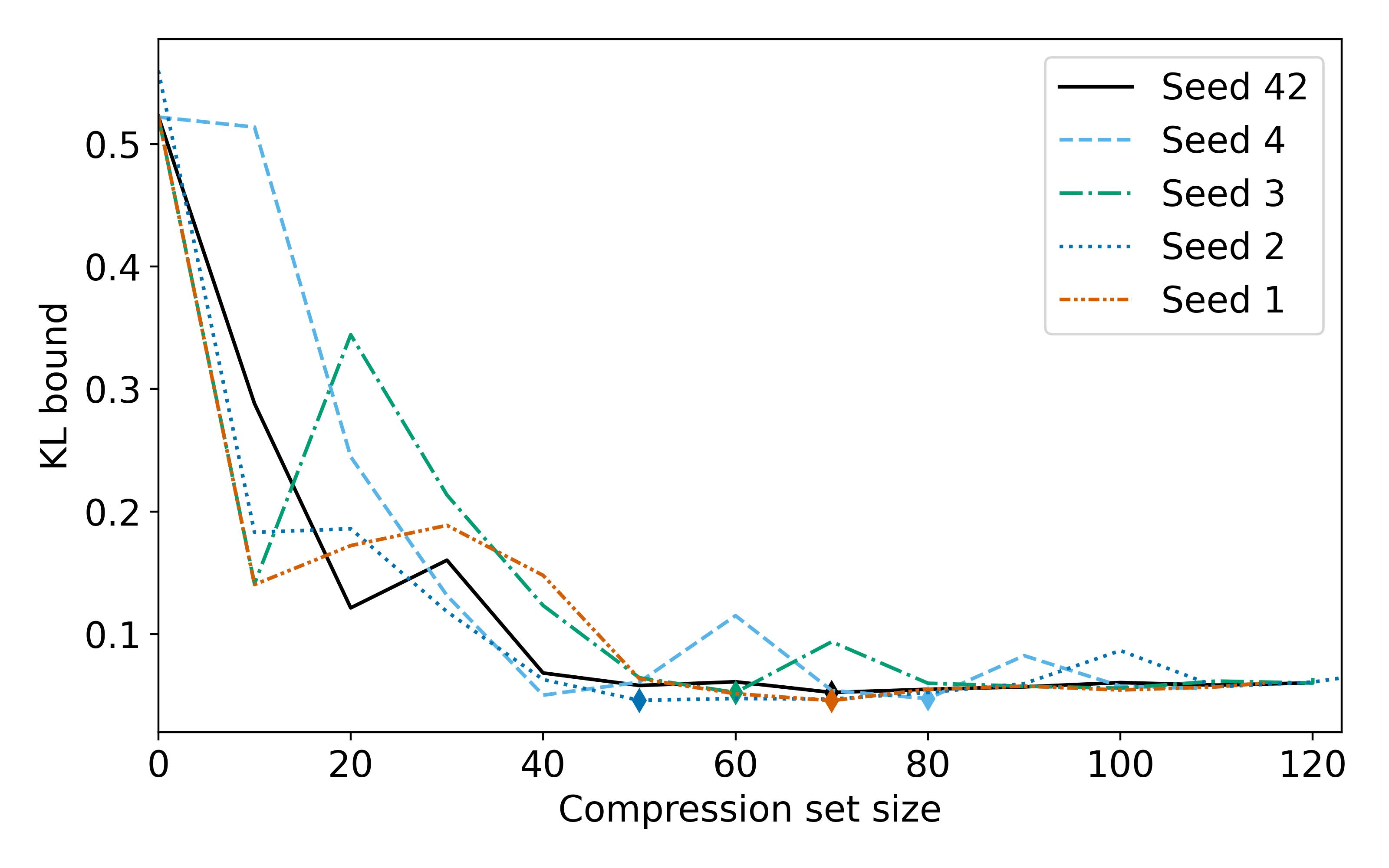}
        \caption{MNIST56}
    \end{subfigure}
    \hfill 
    \caption{Illustration of the behavior of the $\kl$ bound throughout P2L iterations for the five different seeds of the hyperparameter combination that achieved the minimal P2L bound on MNIST49 and MNIST56.  We mark the minimal $\kl$ bound for each seed with a diamond ($\blacklozenge$). The results for the other datasets can be found in \cref{fig:early_stop_appendix}.}\label{fig:early_stop}
\end{figure*}
\begin{table*}[!t]
  \caption{Cross-entropy loss achieved by the CNNs on MNIST. The results displayed obtained the smallest $\kl$ bound.}
  \label{tab:mnist_cross_entropy}
  \centering \small
  \begin{tabular}{cccccc}
    \toprule
    Learning algorithm & Train loss &	Test loss & $\kl$ bound &	$\m/n$ & Baseline test loss\\
    \midrule
P2L & 0.0014$\pm$0.0006  & 0.0467$\pm$0.0056 & 0.8279$\pm$0.0256 & 1.13$\pm$0.24& \multirow{2}{*}{0.0499$\pm$0.0042} \\ 
PBB & 0.0092$\pm$0.0005  & 0.0045$\pm$0.0004 & 0.0112$\pm$0.0005 & - & \\
    \bottomrule
  \end{tabular}
  \vspace{-0.3cm}
\end{table*}
\subsection{Binary MNIST}\label{sec:binary_mnist}
We create binary classification datasets by extracting pairs of digits from the MNIST dataset \citep{lecun1998_mnist}, e.g., selecting the datapoints labeled $0$ and $8$ to build the dataset MNIST08. We create five datasets: MNIST08, MNIST17, MNIST23, MNIST49 and MNIST56. Starting from randomly initialized neural networks, we train a MLP and a CNN using P2L on each dataset. More details are given in \cref{app:binary_mnist_problems}.

For all experiments in this section, we compute our proposed $\kl$ bound (\cref{corr:kl_bound}), the binomial approximation bound of \cite{shah_margin-sparsity_2005} (\cref{corr:binom_approx}, in appendix) and the P2L bound of \cite{paccagnan_pick_learn_2023} (\cref{thm:p2l}). We do not compute the binomial tail inversion of \cref{thm:binom_tail} as its optimization is very unstable. However, the binomial approximation is equivalent to \cref{thm:binom_tail} when $k=0$, which corresponds to the consistent case reached by the P2L algorithm.

\cref{tab:binary_mnist_cnn} presents results for the CNN in the consistent case. The results for the MLP can be found in \cref{app:binary_mnist_problems}. The error on the training set is zero for all predictors returned by P2L. The presented results achieve the tightest P2L bound for each dataset. The reported ``baseline test error'' corresponds to the results of a standard neural network optimized on the whole training set by stochastic gradient descent for 200 epochs or until the model achieves zero training errors; the selected hyperparmeters are the ones minimizing the validation error. For both CNN and MLP architectures, using P2L only incurs a slight increase of the test error compared to the baseline, whilst the model is trained on a fraction of the dataset, ranging from 0.7\%  to 3.4\%. Finally, even though the P2L bound is much tighter than the proposed $\kl$ bound, our result is much more general, as it holds for any real-valued loss functions and in the non-consistent case. Moreover, our bounds hold uniformly over all iterations of the models trained using P2L. After training, one can use any checkpoint of the model and still obtain a valid bound, which gives control over a trade-off between the training error, the generalization bound and the validation error. 

As for the inconsistent case, Figure~\ref{fig:early_stop} presents the behavior of the bound throughout the P2L iterations. We observe that the minimal $\kl$ bound value happens at about half the final number of iterations, leading to a smaller compression set and a tighter bound, as also reported in \cref{tab:binary_mnist_cnn_early_stopped}. Recall that the P2L bound cannot be computed in this case, as the model do not reach zero errors. In comparison to the previous consistent results (\cref{tab:binary_mnist_cnn}), the test error of \cref{tab:binary_mnist_cnn_early_stopped} are about twice as high as the fully trained model. However, the inconsistent models are trained on very small portions of the dataset, with the MNIST17 model being trained on 0.42\% of the dataset and still achieving a test error of 0.48\%. Finally, we observe that, in this setting, our new $\kl$ bound is much tighter than the binomial approximation of \cite{shah_margin-sparsity_2005}. 

\subsection{MNIST}\label{sec:mnist}
\begin{table*}[t]
\caption{Results for the decision forests trained using P2L. We report the RMSE achieved by the models with the smallest $\kl$ bound. The ratio $\m/n$ is presented in percents (\%).} \label{tab:decision_forests}
\begin{center}
\resizebox{2\columnwidth}{!}{%
\begin{tabular}{ccccccccc}
\toprule
Dataset & Train loss & Validation loss &	Test loss &	$\kl$ bound	&Linear bound	&$\m /n$&	Baseline test loss & $\ell^{\max}$ \\
\midrule
Powerplant & 6.11$\pm$0.89 &	6.23$\pm$0.70 &	6.31$\pm$0.95 &	13.69$\pm$0.27 & 15.92$\pm$0.47	& 0.51$\pm$0.17 &	3.59$\pm$0.13 & 90.6 \\
Infrared & 0.27$\pm$0.03 &	0.29$\pm$0.04 &	0.30$\pm$0.03	& 1.08$\pm$0.08 &  1.16$\pm$0.08 &	2.32$\pm$0.66 & 0.23$\pm$0.01 & 4.26 \\
Airfoil & 3.67$\pm$0.16 &	4.03$\pm$0.37 &	3.90$\pm$0.18 &	14.19$\pm$0.49 & 14.25$\pm$0.42 &	3.23$\pm$0.46	& 2.10$\pm$0.15 & 45.13 \\
Parkinson & 7.79$\pm$0.33 &	7.79$\pm$0.28 &	7.84$\pm$0.27 &	12.24$\pm$0.27 & 12.02$\pm$0.23 &	0.43$\pm$0.10 	 & 2.23$\pm$0.16 & 41.37 \\
Concrete & 8.18$\pm$0.91 &	8.70$\pm$1.00 &	8.48$\pm$1.41 &	31.68$\pm$1.63& 32.49$\pm$1.52 &	3.81$\pm$0.82  &	4.70$\pm$0.36 & 90.63\\ \bottomrule
\end{tabular}
}
\end{center}
\end{table*}
\begin{table*}[ht]
\caption{Results for the decision trees trained using P2L. We report the RMSE achieved by the models with the smallest $\kl$ bound. The ratio $\m/n$ is presented in percents (\%)} \label{tab:decision_trees}
\begin{center}
 \resizebox{2\columnwidth}{!}{%
\begin{tabular}{ccccccccc}
\toprule
Dataset & Train loss & Validation loss &	Test loss &	$\kl$ bound	&Linear bound	&$\m/n$&	Baseline test loss & $\ell^{\max}$ \\
\midrule
Powerplant & 11.66$\pm$3.00 &	11.83$\pm$3.12 &	12.00$\pm$3.04	&23.40$\pm$1.59 & 24.15$\pm$1.54 & 0.94$\pm$0.49 & 4.07$\pm$0.13 & 90.6\\
Infrared & 0.48$\pm$0.10	& 0.47$\pm$0.11 & 0.48$\pm$0.07 &	1.34$\pm$0.03 & 1.33$\pm$0.06 &	2.10$\pm$1.06	&0.27$\pm$0.03 & 4.26\\
Airfoil & 11.02$\pm$1.71 &	10.89$\pm$1.38 &	11.10$\pm$1.93 &	18.87$\pm$1.97 & 18.14$\pm$1.75 &	1.00$\pm$0.03 &	3.01$\pm$0.19 & 45.13\\
Parkinson & 13.93$\pm$2.83 &	13.84$\pm$2.76 &	13.99$\pm$2.92	&17.75$\pm$3.34 & 15.10$\pm$7.23	& 0.25$\pm$0.00 &	3.20$\pm$0.15 & 41.37\\
Concrete & 26.08$\pm$4.96 &	25.05$\pm$4.38 &	26.40$\pm$4.05 &	45.79$\pm$4.26 & 44.47$\pm$4.24 &	1.51$\pm$0.33	&6.22$\pm$0.91 & 90.63\\
\bottomrule
\end{tabular}
}
\end{center}
\vspace{-0.3cm}
\end{table*}
We now train convolutional neural networks composed of two convolutional layers and two fully connected layers. We pre-train the model using stochastic gradient descent on a subset of the dataset and then use P2L to fine-tune the model on the train set. The size of the pre-training subset is an hyperparameter. We use the same training setting as in \cref{sec:binary_mnist} and use the extension of P2L that adds multiple datapoints to the compression set at a time, with batch size $R=32$, as defined by Algorithm~2 of \cite{paccagnan_pick_learn_2023}. For comparison, we also train probabilistic neural networks (PNN) using the PAC-Bayes with Backprop (PBB) approach of \cite{perez2021tighter}, which optimize a PAC-Bayesian $\kl$ bound (\cref{thm:pbb} in appendix). 

For both our new sample-compression bounds and the PAC-Bayesian bound of \cite{perez2021tighter}, we compute the bounds on the zero-one loss and on a bounded version of the cross-entropy loss (see \cref{app:mnist_problems}). The probabilities outputted by the neural networks are restricted to be greater than $10^{-5}$, effectively bounding the cross-entropy by $-\ln(10^{-5}) \approx 11.51$.

\cref{tab:mnist_cross_entropy} reports the bound values for the bounded cross-entropy loss
(see \cref{tab:mnist_risk_01} for classification error).
We observe that the PBB algorithm gives a tighter generalization bound than the one of P2L. This gap can be explained by the fact that PBB jointly optimizes the train error and the $\KL$ divergence, whilst we have almost no control on the minimization of the bound. Indeed, the heuristic of the P2L algorithm, which is to choose the datapoints over which the model incurs the greatest losses, doesn't give control on the trade-off between the decrease of the error and the increase of the complexity term. Moreover, for a large dataset, the binomial coefficient increases rapidly when the compression set size increases. However, using our bounds with the P2L algorithm has multiple advantages over the PBB algorithm. First of all, PBB needs to train twice as many parameters, as it fits both the mean and standard deviation of the distributions over the parameters. Secondly, computing the PAC-Bayesian bound necessitates a step of Monte Carlo sampling to determine the average error of the model. For 5000 steps of Monte Carlo sampling, the prediction over the dataset is computed 5000 times, instead of only once with P2L. Finally, our bound doesn't take into account the number of parameters of the model, whilst the $\KL$ divergence in \cref{thm:pbb} is a sum of the $\KL$ divergence of the distribution of each parameter of the model.
\subsection{Regression with tree-based models}\label{sec:regression}
\begin{table*}[t]
\caption{Results for the amazon polarity dataset. The results displayed for P2L obtained the lowest $\kl$ bound on the error, whilst the baseline was chosen by the lowest validation error. The ratio $\m/n$ is presented in percents (\%).} \label{tab:amazon_polarity}
\begin{center}
\small
\resizebox{2\columnwidth}{!}{%
\begin{tabular}{ccccccccccc}
\toprule
\multirow{2}{4em}{Learning algorithm} & \multicolumn{4}{c}{Error (\%)} &\multicolumn{3}{c}{Cross-entropy loss} \\ \cmidrule(l{2pt}r{2pt}){2-5} \cmidrule(l{2pt}r{2pt}){6-8} &  Train & Test & $\kl$ bound & Binomial bound & Train &  Test & $\kl$ bound & $\m/n$\\
\midrule
P2L & 4.73$\pm$1.09 & 5.60$\pm$1.19 & 13.91$\pm$2.73 & 21.85$\pm$3.29 & 0.1199$\pm$0.0118  & 0.1478$\pm$0.0182 & 0.8594$\pm$0.1622 & 0.79$\pm$0.23\\
Baseline & 3.11$\pm$0.02 & 4.19$\pm$0.00 & - & - & 0.0912$\pm$0.0010 & 0.1158$\pm$0.0002 & - & -\\
\bottomrule
\end{tabular}
}
\end{center}\vspace{-0.3cm}
\end{table*}
In order to show the wide applicability of our bounds, we train decision forests on regression problems: Powerplant \citep{Tfekci2014PredictionOF}, Infrared \citep{wang2021infrared}, Airfoil \citep{brooks1989airfoil}, Parkinson \citep{tsanas2009accurate} and Concrete \citep{yeh1998modeling}. These datasets range from a training set size of 827 to 7751 and range from a number of features of 4 to 33. 
To the best of our knowledge, no sample compression bounds exist for this setting. Previous results were presented for linear regression under an $\ell_p$ loss \citep{hanneke_agnostic_2018,attias2023optimal,attias2024sample} or for boosting real-valued learner in a binary classification setting \citep{hanneke_sample_2019}, none of which are equivalent to our setting. We adapt the P2L algorithm to this regression problem (see Algorithm~\ref{alg:p2l_trees} in appendix), which differs from the original one, designed only for classification problems where zero training error is achievable (consistent case). At each P2L iteration, we add a single datapoint to the compression set in order to train the forest. The selected datapoint is the one with the largest root mean squared error (RMSE). Then, the trees are retrained from scratch on the compression set. As the minimal RMSE that can be achieved is dependent on the dataset, setting a predetermined threshold is not a suitable stopping criterion. Thus, we train the model until the validation loss has not decreased for a given number of iterations. To compute the bounds, we need the loss to be either bounded or sub-Gaussian. As tree-based models predict the mean of the targets of each datapoint assigned to a leaf, their outputs are bounded by the extrema of the data. 
To compute the $\kl$ bound, we assume that the target space is bounded by the maximum value of the loss $\ell^{\max}$ reported in \cref{tab:decision_forests}. To compute the linear bound, we assume that the loss is sub-Gaussian. We discuss in more details these assumptions and the way of defining the extrema in \cref{app:regression_problems}.

\cref{tab:decision_forests} contains the results of the models selected based on the smallest $\kl$ bounds. We observe that the models trained with P2L are able to obtain competitive results with respect to the test error of standard random forests trained on the whole dataset. We report latter results in the column ``baseline test loss'', where the models are chosen by their validation loss. As the values of the bounds are much smaller than $\ell^{\max}$, we conclude that our bounds are tight and non-vacuous. 
The generalization guarantees given by the bounds relying on the linear function are competitive to the ones relying on the $\kl$; they are even tighter on the Parkinson dataset.

Following a similar experiment setting, we trained regression trees with P2L. As we can see in \cref{tab:decision_trees}, training trees using P2L leads to underfitted models that are not competitive with respect to the baseline. Indeed, as the trees are trained on a small subset of the data, they are restricted to be less complex than trees trained on the whole dataset. When selecting the models by the smallest validation loss (see \cref{tab:decision_trees_validation_loss}), we observe that the models achieve better performance, as the compression sets are much larger, but also suffer from worsened bounds. %
\subsection{Amazon polarity}\label{sec:amazon}
Finally, we train DistilBERT \citep{sanh2019distilbert} on the Amazon reviews polarity dataset \citep{zhang2015character}. Using P2L, we fine-tune the pretrained language model on 10\% of the dataset, for a total of 360k datapoints, and evaluate the model on the test set, which comprises 400k datapoints. We pre-train the model on half of the training dataset and then use P2L on the other half of the training set. We add 32 datapoints at a time in the compression set and early stop the training of the model if its validation loss has not decreased for 20 epochs. 
In this experiment, we study our new $\kl$ bound on the zero-one loss and on the bounded cross-entropy loss. Moreover, we compute the binomial approximation bound of \cref{corr:binom_approx}. The P2L bound (\cref{thm:p2l}) is inapplicable in this setting, as the model doesn't reach zero errors. The PAC-Bayesian bound of \cref{thm:pbb} could be computed on both metrics, but it would necessitate to train 132M parameters (twice the number of parameters of DistilBERT). Many new generalization bounds and approaches were presented for very large models \citep{lotfi2023non, lotfi2025unlocking, zekri2024largelanguagemodelsmarkov, su2024missionimpossiblestatisticalperspective}, such as large language models. However, most approaches are not suited for classification and regression, as they are derived for language modeling objectives.

From the results displayed by \cref{tab:amazon_polarity}, we first observe that training the model using P2L only incurs a loss of about a percent for the train, validation and test error compared to the baseline. It achieves this error whilst being trained on about 0.8\% of the dataset, as the compression set size is roughly 1138 datapoints and the training set size is 144k. Both for the error and the cross-entropy loss, the bound is tight and non-vacuous. Our bound is much tighter than the binomial approximation bound, with a certificate of $13.91\%$ for a train error of $4.73\%$. Thus, despite the 66M parameters of DistilBERT, we are able to obtain tight generalization guarantees by simply changing the training loop of the model for the P2L scheme. 

\section{CONCLUSION}
We proposed novel generalization bounds for real-valued losses and sample-compressed predictors. These bounds leverage the comparator functions studied in the PAC-Bayes theory. We provide results for bounded and unbounded losses, under different assumptions. We empirically verified the tightness of the proposed bounds, showing that it is almost as tight as the binomial tail inversion, which, however, holds only for a less general setting. We trained neural networks with 66M parameters and obtained tight guarantees, without suffering from the cost of the number of parameters. This highlights an important asset of the sample compression framework: Two models achieving the same empirical loss using the same amount of datapoints (compression set size) share the same guarantees (bound value), regardless of their size in terms of the number of trainable parameters.

In future works, we could leverage the possibility of having a message in the compression scheme, by training models such as the set covering machine \citep{shah_margin-sparsity_2005} or decision trees \citep{shah_sample_2007}, which both use binary sequences to specify how to reconstruct the model. Finally, although P2L is generally able to train good performing models, it is unclear that its sample selection heuristic is optimal for neural networks. Trying different heuristics, e.g., that optimize for sample diversity, could lead to further improvements.
\clearpage

\subsubsection*{Acknowledgements}
We would like to thank the anonymous reviewers for their helpful comments. We also wish to thank Benjamin Leblanc and Sokhna Diarra Mbacke for the insightful discussions and their help proof-reading the manuscript.

Mathieu Bazinet is supported by a FRQNT B2X scholarship (343192). Pascal Germain is supported by the Canada CIFAR AI Chair Program and the NSERC Discovery grant RGPIN-2020-07223.

\bibliographystyle{apalike}
\bibliography{bibliography}

\section*{Checklist}

 \begin{enumerate}
 \item For all models and algorithms presented, check if you include:
 \begin{enumerate}
   \item A clear description of the mathematical setting, assumptions, algorithm, and/or model. \textbf{Yes, in \cref{sec:background}.}
   \item An analysis of the properties and complexity (time, space, sample size) of any algorithm. \textbf{Not applicable.}
   \item (Optional) Anonymized source code, with specification of all dependencies, including external libraries. \textbf{Yes, the code can be found here : \url{https://github.com/GRAAL-Research/pick-to-learn}.}
 \end{enumerate}

 \item For any theoretical claim, check if you include:
 \begin{enumerate}
   \item Statements of the full set of assumptions of all theoretical results. \textbf{Yes, the only assumption is that the data is \emph{i.i.d.} and is mentioned in our setting in \cref{sec:background}.}
   \item Complete proofs of all theoretical results. \textbf{Yes, in \cref{app:proofs}.}
   \item Clear explanations of any assumptions. \textbf{The only assumption is that the data is \emph{i.i.d.}, which is the classical setting in the literature. }
 \end{enumerate}

 \item For all figures and tables that present empirical results, check if you include:
 \begin{enumerate}
   \item The code, data, and instructions needed to reproduce the main experimental results (either in the supplemental material or as a URL). \textbf{All the code and instruction can be found in the repository. The data is imported directly when the code is executed.}
   \item All the training details (e.g., data splits, hyperparameters, how they were chosen). \textbf{All the training details can be found in \cref{app:experiments} and the code.}
 \item A clear definition of the specific measure or statistics and error bars (e.g., with respect to the random seed after running experiments multiple times). \textbf{All metrics are defined in either \cref{sec:experiments} or \cref{app:experiments}. We present the mean and standard deviation over five seeds.}
 \item A description of the computing infrastructure used. (e.g., type of GPUs, internal cluster, or cloud provider). \textbf{Yes, in \cref{app:experiments}.}
 \end{enumerate}

 \item If you are using existing assets (e.g., code, data, models) or curating/releasing new assets, check if you include:
 \begin{enumerate}
   \item Citations of the creator If your work uses existing assets. \textbf{Yes.}
   \item The license information of the assets, if applicable. \textbf{Yes, in \cref{app:experiments}.}
   \item New assets either in the supplemental material or as a URL, if applicable. \textbf{Yes, the code can be found here : \url{https://github.com/GRAAL-Research/pick-to-learn}.}
   \item Information about consent from data providers/curators. \textbf{Not applicable.}
   \item Discussion of sensible content if applicable, e.g., personally identifiable information or offensive content. \textbf{Not applicable.}
 \end{enumerate}

 \item If you used crowdsourcing or conducted research with human subjects, check if you include:
 \begin{enumerate}
   \item The full text of instructions given to participants and screenshots. \textbf{Not applicable}
   \item Descriptions of potential participant risks, with links to Institutional Review Board (IRB) approvals if applicable. \textbf{Not applicable}
   \item The estimated hourly wage paid to participants and the total amount spent on participant compensation. \textbf{Not applicable}
 \end{enumerate}

 \end{enumerate}
 
\appendix
\onecolumn
\aistatstitle{Sample compression unleashed : \\
Supplementary Materials}
\section{EXPERIMENTS} \label{app:experiments}

\paragraph{Devices.}
The experiments were run on two different devices. The experiments with PBB algorithm and the regression datasets were run on Python 3.12.2 on a computer with a NVIDIA GeForce RTX 4090. The experiments on MNIST were run on Python 3.12.3 on a computer with a NVIDIA GeForce RTX 2080 Ti. 

\paragraph{Librairies.}
The libraries used for each environment can be found with the code. Notably, we use PyTorch \citep{Ansel_PyTorch_2_Faster_2024} (BSD 3-Clause License), Lightning \citep{Falcon_PyTorch_Lightning_2019} (Apache 2.0 license), Weight and Biases \citep{wandb} (MIT License), Scikit-Learn \citep{scikit-learn} (BSD 3-Clause License), NumPy \citep{harris2020numpy} (NumPy license) and Transformer \citep{wolf2020huggingface} (Apache 2.0 license). For all experiments, we run the code with the following seeds : $\{1,2,3,4,42\}$.

\paragraph{Datasets.}
For the classification problems, we use the MNIST dataset \citep{lecun1998_mnist} (MIT License) and the amazon polarity dataset \citep{zhang2015character} (Apache 2.0 License). All MNIST derived-dataset are composed of 784 real-valued features. For the multi-class classification problems on MNIST, we denote MNIST ($p$\%) to say that we pre-train the model on $p$\% of the data, where $p$ is a hyperparameter. For the Amazon polarity dataset, we chose 10\% of the dataset to create a 360k datapoints dataset. We then use 50\% to pre-train the model and split the rest into a training and validation set. The datapoints are textual reviews and the labels are binary. The description of the dataset are presented in \cref{tab:classification_dataset_infos}.

\begin{table}[ht]
\caption{Description of the datasets used for classification problems.} \label{tab:classification_dataset_infos}
\begin{center}
\begin{tabular}{cccccccc}
\toprule
Dataset & Pretrain set size & Train set size & Validation set size &	Test set size\\
\midrule
Amazon Polarity & 180000 & 144000 & 36000 & 400000\\  
MNIST (10\%) & 6000 & 48000 & 6000 & 10000 \\
MNIST (20\%) & 12000& 42000 & 6000 & 10000 \\
MNIST (50\%) & 30000 & 24000 & 6000 & 10000 \\
MNIST08 & 0 & 10597 & 1177 & 1954 \\
MNIST17 & 0 & 11707 & 1300 & 2163 \\
MNIST23 & 0 & 10881 & 1208 & 2042 \\
MNIST49 & 0 & 10612 & 1179 & 1991\\
MNIST56 & 0 & 10206 & 1133 & 1850 \\
\bottomrule
\end{tabular}
\end{center}
\end{table}

For the regression problems, we train our models on five datasets : the \emph{Combined Cycle Power Plant} \citep{Tfekci2014PredictionOF, combined_cycle_power_plant_294}, the \emph{Infrared Thermography Temperature} \citep{wang2021infrared, wang2023facial}, the \emph{Airfoil Self-Noise} \citep{brooks1989airfoil, airfoil_self-noise_291}, the \emph{Parkinsons Telemonitoring} \citep{tsanas2009accurate, parkinsons_telemonitoring_189} and the \emph{Concrete Compressive Strength} \citep{yeh1998modeling, concrete_compressive_strength_165}. The descriptions of the dataset are presented in \cref{tab:regression_dataset_infos}. All datasets were chosen from the UCI dataset repository. Powerplant, Airfoil, Parkinson and Concrete are under the CC-BY 4.0 license. The Infrared dataset is under the CC0 license. 
\begin{table}[ht]
\caption{Description of the datasets used for regression problems.} \label{tab:regression_dataset_infos}
\begin{center}
\begin{tabular}{ccccccc}
\toprule
Dataset & Train set size & Validation set size &	Test set size&	Number of features	 \\
\midrule
Powerplant & 7751 & 861 & 956 & 4  \\
Infrared & 827 & 91 & 102 & 33 \\
Airfoil & 1218 & 135 & 150 & 5 \\
Parkinson & 4760 & 528 & 587 & 19 \\
Concrete & 835 & 92 & 103 & 8 \\
\bottomrule
\end{tabular}
\end{center}
\end{table}

\subsection{Hyperparameter grids}\label{sec:hyperparameters_grid}

In this section, we present the hyperparameter grids for all the experiments.

In all experiments, we use $\delta=0.01$ and a batch size of $64$. After each iteration of P2L, we train the model for 200 epochs or until the validation loss has not improved for three epochs.

\subsubsection{Binary MNIST problems}\label{app:binary_mnist_problems}

For the binary MNIST problems, we used the following hyperparameters for both MLP and CNN architectures. 
\begin{itemize}
    \item Dropout probability : $\{0.1, 0.2\}$
    \item Training learning rate : $\{10^{-2}, 10^{-3}, 5\times 10^{-3}, 10^{-4}\}$
\end{itemize}

The MLP is composed of three hidden fully connected layers of 600 neurons and the CNN is composed of two convolutional layers and two fully connected layers. We use ReLU activations \citep{glorot2011deep}, dropout layers \citep{dropout2014} and the Adam optimizer \citep{kingma2014adam} with the default parameters $\beta=(0.9, 0.999)$.

At each iteration, the P2L algorithm adds one datapoint to the compression set. We use $h_0$ a randomly initialized neural network.

For the baselines, we train the same models with the same hyperparameters for 200 epochs or until the model achieves zero errors on the training set.

We present the results for the MLP, both trained fully using P2L and early-stopped, respectively in \cref{tab:binary_mnist_mlp} and in \cref{tab:binary_mnist_mlp_early_stopped}. Moreover, \cref{fig:early_stop_appendix} displays the results not present in \cref{fig:early_stop}.

\begin{table}[h]
\caption{Results for the MLPs  trained using P2L on the binary MNIST problems. The results displayed obtained the tightest P2L bound. All metrics presented are in percents (\%).} \label{tab:binary_mnist_mlp}
\begin{center}

\resizebox{1\columnwidth}{!}{%
\begin{tabular}{cccccccc} \toprule
Dataset & Validation error &	Test error&	$\kl$ bound	&Binomial bound	&P2L bound	&$\m/n$&	Baseline test error \\
\midrule
MNIST08 &	0.41$\pm$0.14&	0.40$\pm$0.08&	6.56$\pm$0.30&	6.51$\pm$0.30&	1.42$\pm$0.08&	1.21$\pm$0.07&	0.34$\pm$0.07 \\
MNIST17&	0.37$\pm$0.14&	0.47$\pm$0.17&	4.93$\pm$0.27&	4.89$\pm$0.27&	1.01$\pm$0.07&	0.85$\pm$0.06&	0.33$\pm$0.09 \\ 
MNIST23&	0.87$\pm$0.24&	0.58$\pm$0.12&	12.21$\pm$0.29&	12.17$\pm$0.29	& 3.06$\pm$0.09&	2.73$\pm$0.09&	0.36$\pm$0.14 \\ 
MNIST49&	1.19$\pm$0.33&	1.04$\pm$0.10&	14.41$\pm$0.05&	14.37$\pm$0.05&	3.78$\pm$0.02&	3.41$\pm$0.02&	0.96$\pm$0.15 \\ 
MNIST56&	0.68$\pm$0.17&	0.65$\pm$0.05&	10.35$\pm$0.31&	10.30$\pm$0.31	&2.48$\pm$0.09&	2.18$\pm$0.09&	0.59$\pm$0.01 \\ \bottomrule
\end{tabular}
}
\end{center}
\end{table}
\begin{table}[h]
\caption{ Results for the MLPs trained using P2L on the binary MNIST problems and stopped at the iteration with the minimum $\kl$ bound. The results displayed obtained the tightest $\kl$ bound. All metrics presented are in percents~(\%).} \label{tab:binary_mnist_mlp_early_stopped}
\begin{center}
\resizebox{1\columnwidth}{!}{%
\begin{tabular}{cccccccc}
\toprule
Dataset &  Validation error &	Test error&	$\kl$ bound	&Binomial bound	& Train error & $\m / n$&	Baseline test error \\
\midrule
MNIST08 & 1.11$\pm$0.52 & 1.04$\pm$0.67 & 5.46$\pm$0.53 & 7.77$\pm$1.64 & 0.85$\pm$0.71 & 0.56$\pm$0.32 & 0.34$\pm$0.07\\
MNIST17&  0.88$\pm$0.39 & 0.80$\pm$0.29 & 4.02$\pm$0.36 & 5.49$\pm$0.77 & 0.50$\pm$0.26 & 0.38$\pm$0.13	& 0.33$\pm$0.09 \\ 
MNIST23&  1.93$\pm$0.49 & 1.59$\pm$0.43 & 10.86$\pm$0.19 & 13.23$\pm$0.74 & 1.27$\pm$0.41 & 1.34$\pm$0.24	& 0.36$\pm$0.14 \\ 
MNIST49&  2.28$\pm$0.53 & 2.07$\pm$0.58 & 13.14$\pm$0.32 & 15.08$\pm$0.99 & 1.22$\pm$0.47 & 1.90$\pm$0.29 & 0.96$\pm$0.15 \\ 
MNIST56& 1.97$\pm$0.53 & 1.88$\pm$0.44 & 8.85$\pm$0.58 & 11.78$\pm$1.44 & 1.38$\pm$0.61 & 0.90$\pm$0.27 & 0.59$\pm$0.01\\ \bottomrule
\end{tabular}
}
\end{center}
\end{table}

\begin{figure*}[t!]
    \centering
    \begin{subfigure}[t]{0.5\textwidth}
        \centering
    \includegraphics[width=\textwidth]{./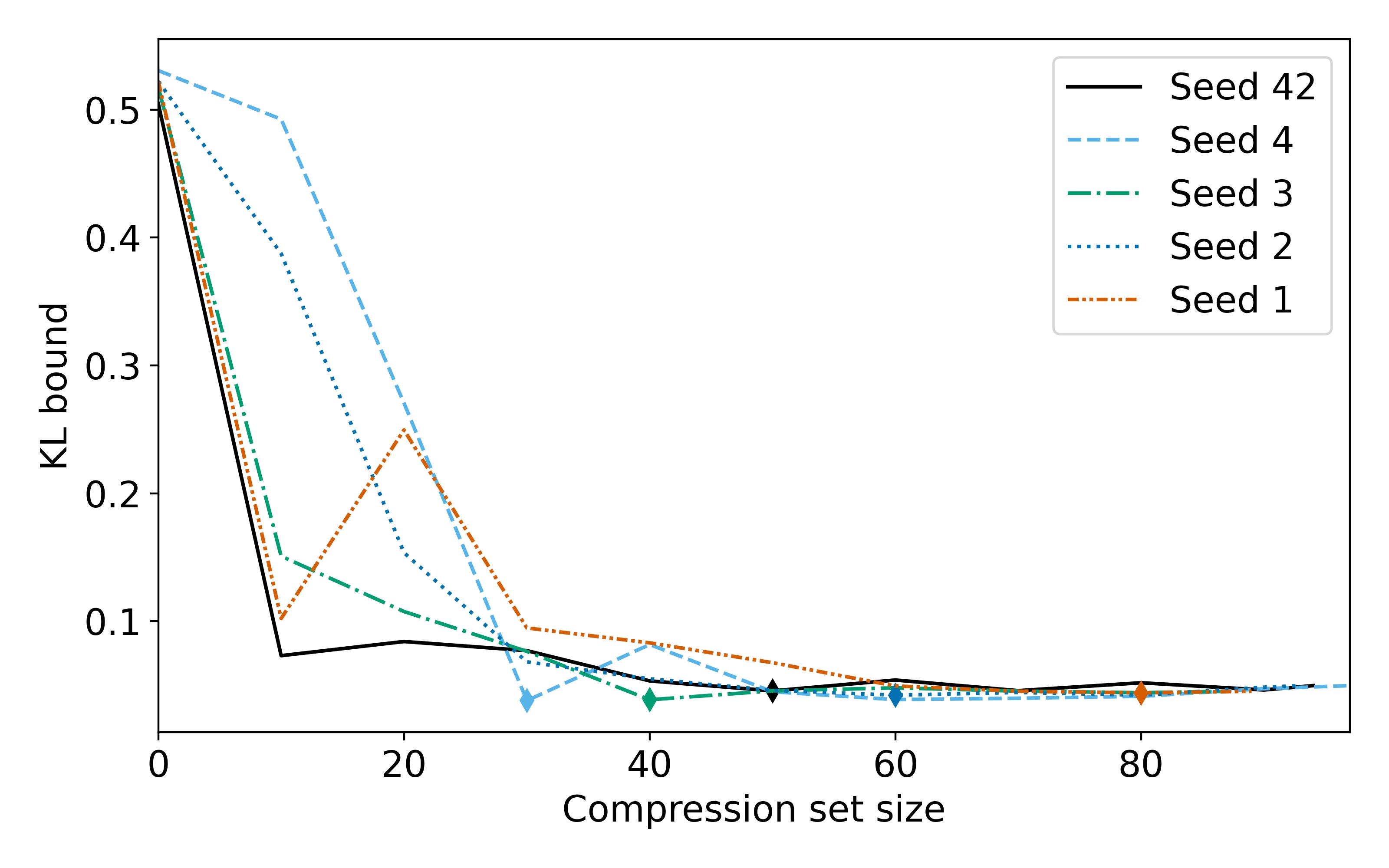}
        \caption{MNIST08}
    \end{subfigure}%
    ~ 
    \begin{subfigure}[t]{0.5\textwidth}
        \centering
        \includegraphics[width=\textwidth]{./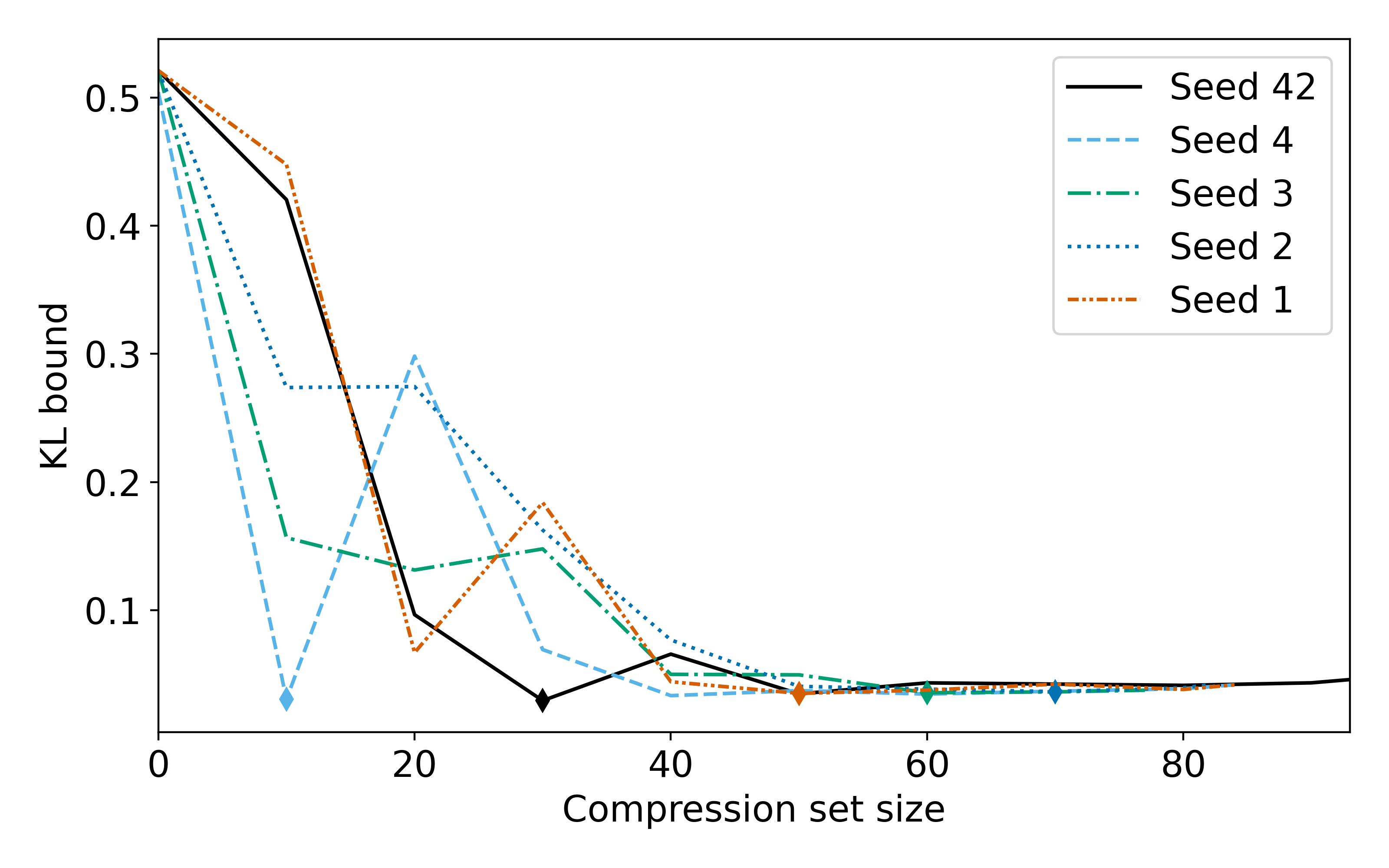}
        \caption{MNIST17}
    \end{subfigure}
    \hfill 
    \\ 
        \begin{subfigure}[t]{0.5\textwidth}
        \centering
        \includegraphics[width=\textwidth]{./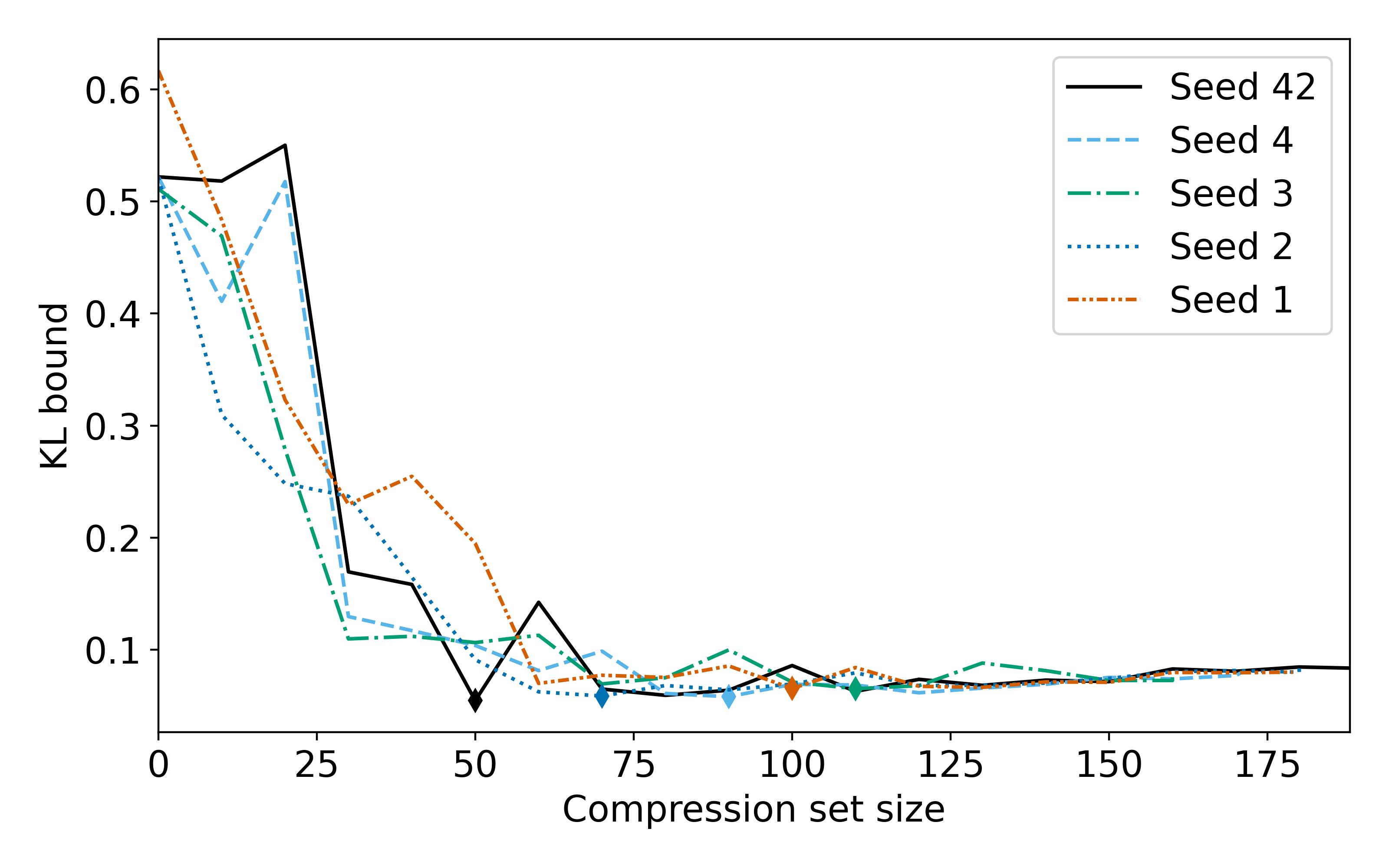}
        \caption{MNIST23}
    \end{subfigure}
    \caption{Illustration of the behavior of the $\kl$ bound throughout P2L iterations for the five different random seed initializations sharing the hyperparameter combination that achieved the minimal averaged P2L bound.  The diamonds ($\blacklozenge$) mark the minimal $\kl$ bound for each run. } \label{fig:early_stop_appendix}
\end{figure*}

\subsubsection{MNIST problems}\label{app:mnist_problems}
\begin{table*}[t]
  \caption{Classification risk achieved by the CNNs on MNIST. The results displayed obtained the smallest $\kl$ bound. All metrics are presented in percents (\%).}
  \label{tab:mnist_risk_01}
  \centering
  \begin{tabular}{lcccccc}
    \toprule
    Learning algorithm & Train error &	Test Error &	$\kl$ bound &	Binomial bound &	$\m/n$ & Baseline test error\\
    \midrule
P2L & 0.0$\pm$0.0 &	1.14$\pm$0.07 & 7.11$\pm$0.22 & 7.08$\pm$0.22 &   1.15$\pm$0.34 & \multirow{2}{*}{1.08$\pm$0.09}\\
PBB & 1.67 $\pm$ 0.07  & 1.05$\pm$0.05 & 1.94$\pm$0.07 & - & - & \\
    \bottomrule
  \end{tabular}
\end{table*}

We train a convolutional neural network over the 10-class MNIST dataset with the following hyperparameters.

\begin{multicols}{2}
\begin{itemize}
    \item Size of pretraining set : $\{10\%, 20\%, 50\%\}$
    \item Pretraining epochs : $\{50, 100\}$
    \item Pretraining learning rate : $\{10^{-2}, 10^{-3}, 10^{-4}\}$
    \item Dropout probability : $\{0.1, 0.2\}$
    \item Training learning rate : $\{10^{-2}, 5\times 10^{-3}, 10^{-4}\}$
\end{itemize}
\end{multicols}

At each iteration, the P2L algorithm adds 32 datapoints to the compression set. We use $h_0$ a neural network pre-trained on $p\%$ of the dataset (see size of pretraining set for the different values of $p$).

To compute bounds for the cross-entropy loss, we clamp the log-probabilities to be greater or equal than $\ln(10^{-5})$ \citep{perez2021tighter, dziugaite_roy_2018}, as follows : 
\begin{equation*}
    \ell(h, \bx, y) =  - \max \qty( \ln\qty(10^{-5}),\, \ln\qty(\frac{\exp(h(\bx)_{y})}{\sum_{c=1}^C \exp(h(\bx)_c)})),
\end{equation*}
where $h(\bx) = \qty(h(\bx)_1, \ldots, h(\bx)_C)$ is the output of the neural network and $C$ is the number of classes. The loss then takes values in $[0, -\ln (10^{-5})]$. We use the same bounded cross-entropy loss for the following experiments.

For the baseline, we train the same model with the same hyperparameters for 200 epochs or until the model achieves zero errors on the training set.

For the PAC-Bayes with Backprop (PBB) algorithm, we used the code of the GitHub repository provided alongside \cite{perez2021tighter}, with the same hyperparameter grid proposed by the authors,\footnote{\url{https://github.com/mperezortiz/PBB}} except for the dropout rate, which we kept the same as the other experiments:
\begin{multicols}{2}
\begin{itemize}
    \item Scale parameter of the prior distribution : $\{0.1, 0.05, 0.04, 0.03, 0.02, 0.01, 0.005\}$
    \item Training learning rate : $\{10^{-3}, 5\times 10^{-3}, 10^{-2}\}$
    \item Pre-training learning rate : $\{10^{-3}, 5\times 10^{-3}, 10^{-2}\}$
    \item Momentum : $\{0.95, 0.99\}$
    \item Dropout probability : $\{0.1, 0.2\}$
\end{itemize}
\end{multicols}
 We fixed $\delta = \delta' = 0.01$ to compute \cref{thm:pbb}, and  performed $m=5000$ Monte Carlo sampling steps instead of using the value $m=150000$ found in the code, as it takes several hours to run.
 
\subsubsection{Regression problems}\label{app:regression_problems}

We trained decision trees and forests on the datasets, using P2L to train the models on one datapoint at a time. We trained the models until their validation loss hasn't decreased for 10 or 20 epochs. We summarize this idea in Algorithm~\ref{alg:p2l_trees}. We denote the RMSE as $\ell^{\textrm{RMSE}}(h, \bx,y) = \sqrt{(h(\bx)-y)^2}$ and the empirical risk on the dataset
\begin{equation*}
\calL^{\textrm{RMSE}}_{S}(h) = \sqrt{\frac{1}{n} \sum_{i=1}^n (h(\bx_i) - y_i)^2}.  
\end{equation*}
For tree-based models, we chose $h_0$ to simply output zeroes for all entries. We use $\texttt{COUNTER}$ and $\hatL_{\textrm{BEST}}$ as variables to stop the training when the loss hasn't decreased for $T$ epochs.

\SetKwInOut{Init}{Initialize}
\SetKwInOut{Input}{Input}
\SetAlgoLined
\DontPrintSemicolon
\begin{algorithm}[t]
\caption{Pick-To-Learn for regression problems}\label{alg:p2l_trees}
\Input{$T$, the number of look-ahead iterations to perform before stopping.}
\Init{$S_{\bfi} \leftarrow \emptyset $.}
\Init{$h_{\bfi} \leftarrow h_0$.}
\Init{$\calL_{\textrm{BEST}} \leftarrow \infty.$}
\Init{$\texttt{COUNTER} \leftarrow 0$.}
\Init{$(\overline{\bx}, \overline{y}) \leftarrow \argmax_{(\bx,y) \in S} \ell^{\textrm{RMSE}}(h_0, \bx, y)$}
\While{$\emph{\texttt{COUNTER}} \leq T$}{
$S_{\bfi} \leftarrow S_{\bfi} \cup \{(\overline{\bx}, \overline{y})\}$\;
$h_{\bfi} \leftarrow A(S_{\bfi})$\;
$(\overline{\bx}, \overline{y}) \leftarrow \argmax_{(\bx,y) \in S_{\bfi^c}} \ell^{\textrm{RMSE}}(h_{\bfi}, \bx, y)$\;
\eIf{$\calL^{\textrm{RMSE}}_{S_{\bfi^c}}(h_{\bfi}) < \calL_{\textrm{BEST}}$}{
$\calL_{\textrm{BEST}} \leftarrow \calL^{\textrm{RMSE}}_{S_{\bfi^c}}(h_{\bfi})$\;
$\texttt{COUNTER} \leftarrow 0$\;
}{
$\texttt{COUNTER} \leftarrow \texttt{COUNTER} + 1$
}
}
\Return $h_{\bfi}$
\end{algorithm}

We now present the hyperparameter grid.
\begin{multicols}{2}
\begin{itemize}
    \item Maximum depth of the trees : $\{5,10\}$
    \item Minimum samples to split : $\{2,3,4\}$
    \item Minimum samples to create a leaf : $\{1,2,3\}$
    \item Cost-Complexity pruning parameter : $\{0.0, 0.05, 0.1, 0.2, 0.5, 1, 2\}$
    \item Number of epochs before early stopping: $\{10, 20\}$
\end{itemize}
\end{multicols}

For the decision forests, we choose the number of estimators in $\{50, 100\}$. For the baselines, we train the same model with the same hyperparameters on the whole dataset. We present the results for the forests in \cref{tab:decision_forests} and \cref{tab:decision_forests_validation_loss}.

\begin{table}[ht]
\caption{Results for the decision forests trained using P2L. We report the RMSE achieved by the models with the smallest validation loss. The ratio $\m/n$ is presented in percents (\%).} \label{tab:decision_forests_validation_loss}
\begin{center}
 \resizebox{\columnwidth}{!}{%
\begin{tabular}{ccccccccc}
\toprule
Dataset & Train loss & Validation loss &	Test loss &	$\kl$ bound	&Linear bound	&$\m/n$&	Baseline test loss & $\ell^{\max}$ \\
\midrule
Powerplant & 4.15$\pm$0.17 & 4.47$\pm$0.21 & 4.51$\pm$0.21 & 18.66$\pm$1.71 & 25.74$\pm$2.06 & 2.13$\pm$0.53 & 3.59$\pm$0.13 & 90.6\\
Infrared & 0.21$\pm$0.01 & 0.23$\pm$0.02 & 0.26$\pm$0.01 & 1.19$\pm$0.06 & 1.48$\pm$0.06 & 3.60$\pm$0.46 &0.23$\pm$0.01 & 4.26\\
Airfoil & 1.78$\pm$0.21 & 2.28$\pm$0.29 & 2.37$\pm$0.17 & 21.91$\pm$1.03 & 25.41$\pm$1.00 & 14.61$\pm$2.03 & 2.10$\pm$0.15 & 45.13 \\
Parkinson & 3.48$\pm$0.44 & 3.69$\pm$0.36 & 3.83$\pm$0.45 & 17.23$\pm$0.55 & 19.24$\pm$0.85 & 7.13$\pm$1.22 &2.23$\pm$0.16 & 45.13\\
Concrete & 4.32$\pm$0.49 & 5.54$\pm$0.71 & 5.43$\pm$0.57 & 45.58$\pm$4.43 & 51.71$\pm$3.69 & 14.87$\pm$4.19 & 4.70$\pm$0.36 & 90.63 \\
\bottomrule
\end{tabular}
}
\end{center}
\end{table}

The results for the decision trees can be found in \cref{tab:decision_trees} and \cref{tab:decision_trees_validation_loss}. Using only P2L to train the trees leads to underfitted trees, as the model is not complex enough to use only a few datapoints to train a complete model.

\begin{table}[ht]
\caption{Results for the decision trees trained using P2L. We report the RMSE achieved by the models with the smallest validation loss. The ratio $\m/n$ is presented in percents (\%).} \label{tab:decision_trees_validation_loss}
\begin{center}
 \resizebox{\columnwidth}{!}{%
\begin{tabular}{ccccccccc}
\toprule
Dataset & Train loss & Validation loss &	Test loss &	$\kl$ bound	&Linear bound	&$\m/n$&	Baseline test loss & $\ell^{\max}$ \\
\midrule
Powerplant & 8.85$\pm$0.98 & 9.02$\pm$0.84 & 9.17$\pm$1.01 & 24.74$\pm$1.42 & 29.19$\pm$1.68 & 1.86$\pm$0.51 & 4.07$\pm$0.13 & 90.6 \\
Infrared & 0.24$\pm$0.02 & 0.27$\pm$0.03 & 0.31$\pm$0.04 & 1.76$\pm$0.05 & 2.08$\pm$0.04 & 8.39$\pm$0.48 & 0.27$\pm$0.03 & 4.26\\
Airfoil & 6.54$\pm$0.89 & 6.55$\pm$0.54 & 6.41$\pm$0.44 & 28.94$\pm$0.71 & 30.73$\pm$0.65 & 15.70$\pm$2.83 & 3.01$\pm$0.19 & 45.13 \\
Parkinson & 12.19$\pm$1.82 & 12.20$\pm$1.73 & 12.09$\pm$2.01 & 21.91$\pm$0.48 & 22.18$\pm$0.55 & 2.47$\pm$1.31 &3.20$\pm$0.15 & 41.37\\
Concrete & 10.24$\pm$1.13 & 10.93$\pm$1.43 & 10.69$\pm$0.98 & 59.40$\pm$2.03 & 62.39$\pm$1.84 & 19.83$\pm$3.38 &6.22$\pm$0.91 & 90.63\\
\bottomrule
\end{tabular}
}
\end{center}
\end{table}

When computing the generalization bounds, we need to assume that the loss is bounded (for the kl bound) or sub-gaussian (for the linear bound).

Note that the RMSE is not bounded. However, the regression trees cannot predict a value greater (respectively lower) than the highest (respectively lowest) target value found in the training dataset. 
To compute the $\kl$ bound, we work under the assumption that the target space $\calY$ is such that
 \begin{equation*}
     \Ycal = \left[  y_- - \tfrac{1}{10}\qty(y_+ - y_-), y_+ - \tfrac{1}{10}\qty(y_+ - y_-) \right]
      \mbox{ with } y_- = \min_{(\bx,y)\in S} y \mbox{ and } y_+ = \max_{(\bx,y)\in S} y\,.
 \end{equation*}
To compute the linear bound, we assume that the distribution is $\varsigma^2$-sub-Gaussian with
\begin{equation*}
   \varsigma = \tfrac{1}{2}\qty(y_+ - y_-).
\end{equation*}
We report the assumed lower and greater values in \cref{tab:regression_dataset_min_max}.

\begin{table}[ht]
\caption{Minimum and maximum target values used to compute the bound on regression problems.} \label{tab:regression_dataset_min_max}
\begin{center}
\begin{tabular}{ccccccc}
\toprule
Dataset & Assumed lower  	& Observed minimum & Observed maximum &	Assumed upper  \\
 & target value	& target value (in $S$) & target value (in $S$) &	 target value \\
\midrule
Powerplant &  412.71 & 420.26 & 495.76 & 503.31 \\
Infrared & 35.40 & 35.75 & 39.3 & 39.66 \\
Airfoil &  99.62 & 103.38 & 140.99 & 144.75\\
Parkinson &  1.59 & 5.04 & 39.51 & 42.96\\
Concrete &  0 & 2.33 & 82.6 & 90.63\\
\bottomrule
\end{tabular}
\end{center}
\end{table}

\subsubsection{Amazon Polarity}
We trained DistilBERT on the Amazon Reviews Polarity dataset. We use a subset of 10\% of the real dataset, amounting to 360k datapoints. 180k are used for pretraining the model, 144k for training and 36k for validation. We use the given test set of 400k datapoints. Using P2L, we add 32 datapoints at a time to the compression set and stop the training when the validation loss hasn't decreased in 20 iterations. The initial model $h_0$ is the model pretrained on the 180k datapoints. For both P2L and the baseline, which was trained for 200 epochs or until it reached 0 errors on the training dataset, we use the following hyperparameter grid : 
\begin{multicols}{2}
\begin{itemize}
    \item Number of pretraining epochs : $\{2,5\}$
    \item Pretraining learning rate : $2\times 10^{-5}$
    \item Dropout probability : $\{0.1, 0.2\}$
    \item Training learning rate : $\{10^{-6}, 10^{-7}, 10^{-8}\}$
\end{itemize}
\end{multicols}

\section{THEORETICAL RESULTS FROM THE LITERATURE}

\begin{corollary}[\citet{shah_margin-sparsity_2005}, Corollary 1]\label{corr:binom_approx}
    For any distribution $\calD$ over $\calX \times \calY$, for any set of messages $\{M(\bfi)\, \forall \bfi \in I\}$, for any deterministic reconstruction function $\scriptR$ that
outputs sample-compressed predictors $h \in \calH$ and for any $\delta \in (0,1]$, with probability at least $1-\delta$ over the draw of $S \sim \calD^n$, we have
    \begin{align*}
        \forall \bfi \in I, \forall \sigma \in M(\bfi)\,: &\ R_{\calD}(\scriptR(S_{\bfi}, \sigma)) \leq 1-\exp\qty(\frac{-1}{n-\m-\kappa} \qty[\ln\mqty(n-\m \\ \kappa) + \ln \mqty(n \\ \m) + \ln \qty(\frac{1}{\zeta(\m)P_{M(\bfi)}(\sigma) \delta})])\,,
    \end{align*}
    with $\kappa = |\mathbf{i}^c|R_{S_{\bfi^c}}(\scriptR(S_{\bfi}, \sigma))$.
\end{corollary}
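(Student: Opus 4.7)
The plan is to deduce \cref{corr:binom_approx} from \cref{thm:binom_tail} by replacing the implicit binomial tail inversion $\overline{\mathrm{Bin}}$ with an explicit closed-form upper bound. Write $\delta' = \binom{n}{\m}^{-1}\zeta(\m)P_{M(\bfi)}(\sigma)\delta$ and interpret $\kappa$ as the number of errors committed on the complement set $S_{\bfi^c}$, so that the binomial tail in \cref{thm:binom_tail} is evaluated at sample size $n-\m$. By definition of $\overline{\mathrm{Bin}}$, if $r^\star = \overline{\mathrm{Bin}}(\kappa, n-\m, \delta')$ then at $r = r^\star$ one has $\sum_{i=0}^\kappa \binom{n-\m}{i} r^i (1-r)^{n-\m-i} \geq \delta'$, so the task reduces to controlling $r^\star$ through a tractable upper estimate of this cumulative binomial probability.

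The key step is to establish the elementary inequality
\begin{equation*}
\sum_{i=0}^k \binom{m}{i}\, r^i (1-r)^{m-i} \;\leq\; \binom{m}{k}(1-r)^{m-k},
\end{equation*}
valid for all integers $0 \leq k \leq m$ and $r \in [0,1]$. Factoring $(1-r)^{m-k}$ from the left reduces the claim to $\sum_{i=0}^k \binom{m}{i}r^i(1-r)^{k-i} \leq \binom{m}{k}$. I would then invoke the identity $\binom{m}{i}\binom{m-i}{k-i} = \binom{m}{k}\binom{k}{i}$, which implies $\binom{m}{i} \leq \binom{m}{k}\binom{k}{i}$ since $\binom{m-i}{k-i} \geq 1$ for $i \leq k \leq m$. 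Substituting this upper bound and invoking the binomial theorem $\sum_{i=0}^k \binom{k}{i}r^i(1-r)^{k-i} = 1$ closes the argument.

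Applied with $m = n-\m$, $k = \kappa$, and $r = r^\star$, the inequality yields $\delta' \leq \binom{n-\m}{\kappa}(1-r^\star)^{n-\m-\kappa}$. Taking logarithms and solving for $r^\star$ gives
\begin{equation*}
r^\star \;\leq\; 1 - \exp\!\left(-\frac{1}{n-\m-\kappa}\left[\ln\binom{n-\m}{\kappa} + \ln\frac{1}{\delta'}\right]\right),
\end{equation*}
and resubstituting $1/\delta' = \binom{n}{\m}/(\zeta(\m)P_{M(\bfi)}(\sigma)\delta)$ recovers the expression in the statement. Since \cref{thm:binom_tail} already provides $R_{\calD}(\scriptR(S_{\bfi},\sigma)) \leq r^\star$ with probability at least $1-\delta$ uniformly in $\bfi \in I$ and $\sigma \in M(\bfi)$, chaining the two bounds yields the corollary.

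The main obstacle I anticipate is handling the degenerate case $\kappa = n-\m$, where the denominator $n-\m-\kappa$ vanishes and the explicit formula is undefined; here one reads the right-hand side as the trivial upper bound $1$, consistent with letting $\kappa \to n-\m$ from below. A secondary bookkeeping point is the notational mismatch in \cref{thm:binom_tail}, whose statement writes $\kappa = n\widehat{R}_{S_{\bfi^c}}$ rather than $(n-\m)\widehat{R}_{S_{\bfi^c}}$; reconciling this with the $\binom{n-\m}{\kappa}$ factor in the corollary requires reading $\kappa$ as the natural count of errors on $S_{\bfi^c}$, after which the combinatorics line up cleanly.
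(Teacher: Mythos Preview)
The paper does not supply its own proof of this statement; \cref{corr:binom_approx} is merely restated in the appendix under ``Theoretical results from the literature'' and attributed to \cite{shah_margin-sparsity_2005}. There is therefore no in-paper argument to compare against.

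That said, your derivation is correct and is the canonical route to this approximation: upper-bound the lower binomial tail by $\binom{m}{k}(1-r)^{m-k}$ and invert. Your combinatorial justification via the identity $\binom{m}{i}\binom{m-i}{k-i}=\binom{m}{k}\binom{k}{i}$, hence $\binom{m}{i}\le\binom{m}{k}\binom{k}{i}$, followed by the binomial theorem, is clean; the subsequent algebra is straightforward. You also correctly flag the two apparent typos in the paper's statement of \cref{thm:binom_tail}---the second argument of $\overline{\mathrm{Bin}}$ should be $n-\m$ rather than $n$, and $\kappa$ should be the raw error count on $S_{\bfi^c}$, i.e.\ $(n-\m)\widehat{R}_{S_{\bfi^c}}$---without which the $\binom{n-\m}{\kappa}$ factor in the corollary could not arise. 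The only minor point worth tightening is the claim that the sum is $\geq\delta'$ at $r^\star$: since the lower tail $r\mapsto\sum_{i\le k}\binom{m}{i}r^i(1-r)^{m-i}$ is continuous and nonincreasing on $[0,1]$, the supremum is attained and equality holds there, so your chain of inequalities goes through as written.
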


\begin{theorem}[\citet{perez2021tighter}]\label{thm:pbb}
    For any distribution $\calD$ over $\calX \times \calY$, for any set $\calH$ of predictors $h : \calX \to \calY$, for any loss $\ell:\calH \times \calX \times \calY \to [0,1]$, for any dataset-independent prior distribution $\prior$ on $\calH$, for any $\delta, \delta' \in (0,1]$, with probability at least $1-\delta-\delta'$ over the draw of $S \sim \calD^n$ and a set of $m$ predictors $h_1, \ldots, h_m \sim \posterior_{S}$, where $\posterior_{S}$ is a dataset-dependent posterior distribution over $\calH$, we have 
    \begin{equation*}
        \E_{h \sim \posterior} \calL_{\calD}(h) \leq \kl^{-1}\qty(\kl^{-1}\qty(\frac{1}{m}\sum_{i=1}^m\hatL_{S}(h_i), \frac{1}{m}\log\frac{2}{\delta'} ), \frac{1}{n} \qty[\KL(\posterior ||\prior) + \ln \qty(\frac{2\sqrt{n}}{\delta})]) \,.
    \end{equation*}
\end{theorem}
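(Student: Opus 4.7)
The plan is to derive \cref{thm:pbb} as the composition of two independent probabilistic inequalities, each contributing one of the nested $\kl^{-1}$'s in the statement, joined by a union bound with total confidence budget $\delta + \delta'$. The outer $\kl^{-1}$ comes from the classical PAC-Bayes-kl bound of Maurer/Seeger applied to the $[0,1]$-valued loss $\ell$, while the inner $\kl^{-1}$ arises from a Chernoff-type control of the Monte Carlo estimate of the posterior-averaged empirical risk.

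For the outer step, I would invoke the Maurer form of the PAC-Bayes-kl inequality: with probability at least $1-\delta$ over $S\sim\calD^n$, simultaneously for every posterior $\posterior$ on $\calH$,
\begin{equation*}
\kl\bigl(\E_{h\sim\posterior}\hatL_S(h),\, \E_{h\sim\posterior}\calL_\calD(h)\bigr) \leq \frac{1}{n}\Bigl[\KL(\posterior\|\prior) + \ln\tfrac{2\sqrt{n}}{\delta}\Bigr].
\end{equation*}
The standard proof uses the Donsker–Varadhan change-of-measure inequality together with Maurer's moment control $\E_S \exp\{n\,\kl(\hatL_S(h),\calL_\calD(h))\} \leq 2\sqrt{n}$, valid for any $[0,1]$-valued loss and any fixed $h$. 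Since $p\mapsto\kl(q,p)$ is increasing for $p\ge q$, inverting delivers
\begin{equation*}
\E_{h\sim\posterior}\calL_\calD(h) \leq \kl^{-1}\!\Bigl(\E_{h\sim\posterior}\hatL_S(h),\, \tfrac{1}{n}\bigl[\KL(\posterior\|\prior)+\ln\tfrac{2\sqrt{n}}{\delta}\bigr]\Bigr).
\end{equation*}

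For the inner Monte Carlo step, I would condition on $S$ and note that $h_1,\dots,h_m$ are i.i.d.\ draws from $\posterior_S$, so the values $\hatL_S(h_i)\in[0,1]$ are i.i.d.\ with common mean $\E_{h\sim\posterior}\hatL_S(h)$. A standard two-sided Chernoff argument for the empirical mean of i.i.d.\ $[0,1]$-variables yields
\begin{equation*}
\Pr\!\Bigl[\kl\bigl(\tfrac{1}{m}\textstyle\sum_{i=1}^m \hatL_S(h_i),\, \E_{h\sim\posterior}\hatL_S(h)\bigr) \geq t\Bigr] \leq 2 e^{-m t};
\end{equation*}
setting $t=\tfrac{1}{m}\log\tfrac{2}{\delta'}$ and inverting gives, with conditional probability at least $1-\delta'$,
\begin{equation*}
\E_{h\sim\posterior}\hatL_S(h) \leq \kl^{-1}\!\Bigl(\tfrac{1}{m}\textstyle\sum_{i=1}^m\hatL_S(h_i),\, \tfrac{1}{m}\log\tfrac{2}{\delta'}\Bigr).
\end{equation*}

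To conclude, I would union-bound the two events: the PAC-Bayes event fails with probability at most $\delta$ (over $S$), the Monte Carlo event fails with conditional probability at most $\delta'$ given $S$, so both hold jointly with probability at least $1-\delta-\delta'$. Plugging the Monte Carlo upper bound for $\E_{h\sim\posterior}\hatL_S(h)$ inside the outer $\kl^{-1}$ then produces exactly the stated nested expression. The only non-mechanical point—and the main thing to get right—is the monotonicity used in this substitution: $q_1 \le q_2 \implies \kl^{-1}(q_1,\epsilon) \le \kl^{-1}(q_2,\epsilon)$. This follows from the definition $\kl^{-1}(q,\epsilon)=\sup\{p:\kl(q,p)\le\epsilon\}$ together with the fact that, for fixed $p$, the map $q\mapsto\kl(q,p)$ is decreasing on $[0,p]$, so increasing $q$ enlarges the admissible set over which the supremum is taken. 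Handling the conditioning cleanly when transferring the Monte Carlo bound through the PAC-Bayes bound is the only delicate step; the rest of the argument is a direct chaining of known inequalities.
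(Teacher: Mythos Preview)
The paper does not actually prove \cref{thm:pbb}; it appears in the appendix under ``Theoretical results from the literature'' as a result quoted from \cite{perez2021tighter}, stated without proof. So there is no in-paper argument to compare against.

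Your derivation is the standard one and is correct: Maurer's PAC-Bayes-$\kl$ bound supplies the outer $\kl^{-1}$, the relative-entropy Chernoff tail for i.i.d.\ $[0,1]$-valued variables (using that the MGF of any $[0,1]$-valued variable is dominated by that of a Bernoulli with the same mean, which is what makes the $\kl$ form valid here even though the $\hatL_S(h_i)$ are not Bernoulli) supplies the inner $\kl^{-1}$ with the factor $2$ coming from the two-sided version, and the union bound combines the two failure probabilities. The monotonicity $q_1\le q_2\Rightarrow \kl^{-1}(q_1,\epsilon)\le\kl^{-1}(q_2,\epsilon)$ that you isolate as the key substitution step is indeed the one non-mechanical point, and your justification via the monotonicity of $q\mapsto\kl(q,p)$ on $[0,p]$ is correct.
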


\section{PROOFS}\label{app:proofs}

\subsection{Proof of the main result}

Before proving \cref{thm:main_results}, we restate Chernoff's bound in a way that will be useful to prove \cref{thm:main_results}. 
\begin{lemma}[Chernoff's bound]\label{eq:chernoff}
    For $t > 0$ and $X$ a random variable :
    \begin{equation*}
        \Prob\qty(X \leq \frac{1}{t}\qty[\ln \E e^{tX} + \ln\frac{1}{\delta}]) \ \geq\ 1-\delta.
    \end{equation*}
\end{lemma}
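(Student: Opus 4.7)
The plan is to derive this as a direct consequence of Markov's inequality applied to the (non-negative) exponential random variable $e^{tX}$. The key observation is that the quantity inside the probability is just the inverse of the monotone transformation $x \mapsto e^{tx}$ applied to a Markov-type tail bound, so there is no hard analytic content beyond Markov's inequality itself.

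First I would note that for any $t > 0$, the variable $Y := e^{tX}$ is non-negative, so Markov's inequality gives $\Prob(Y \geq a) \leq \E Y / a$ for every $a > 0$. Next, I would make the specific choice $a = \E e^{tX} / \delta$, which is exactly calibrated so that the right-hand side equals $\delta$. This yields
\begin{equation*}
\Prob\!\left(e^{tX} \geq \tfrac{1}{\delta}\E e^{tX}\right) \leq \delta,
\end{equation*}
and complementing gives $\Prob(e^{tX} \leq \tfrac{1}{\delta}\E e^{tX}) \geq 1 - \delta$.

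Then I would take logarithms on both sides of the event (valid because $\ln$ is strictly increasing, so the event is preserved), obtaining $tX \leq \ln \E e^{tX} + \ln(1/\delta)$ with probability at least $1 - \delta$. Finally, dividing through by $t > 0$ (again preserving the event, since $t$ is positive) yields exactly the stated inequality
\begin{equation*}
\Prob\!\left(X \leq \tfrac{1}{t}\bigl[\ln \E e^{tX} + \ln\tfrac{1}{\delta}\bigr]\right) \geq 1 - \delta.
\end{equation*}

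There is essentially no main obstacle: the only things to be careful about are (i) justifying that $\E e^{tX}$ is well-defined — if it is $+\infty$ the statement is trivial since the right-hand bound is vacuous — and (ii) making sure the direction of inequalities is preserved when applying monotone transformations, which relies on $t > 0$ and $\ln$ being monotone on $(0, \infty)$. No integrability assumption beyond the finiteness of the MGF at $t$ is needed, and no further probabilistic machinery is invoked.
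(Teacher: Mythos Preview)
Your proof is correct and follows essentially the same approach as the paper: both apply Markov's inequality to $e^{tX}$ (the paper phrases this as invoking the standard Chernoff inequality $\Prob(X \geq \epsilon) \leq e^{-t\epsilon}\,\E e^{tX}$ and then solving for $\epsilon$), choose the threshold so that the right-hand side equals $\delta$, and take logarithms. The only cosmetic difference is that you spell out the Markov step explicitly rather than citing Chernoff as a starting point.
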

\begin{proof}[Proof of \cref{eq:chernoff}]
Chernoff's bound states that for a random variable $X$, any $t>0$ and $\epsilon > 0$,  we have
    \begin{align*}
        \Prob\qty(X > \epsilon) \ \leq\ e^{-t \epsilon} \E e^{t X}\,.
    \end{align*}
By choosing $\delta = e^{-t \epsilon} \E e^{t X}$, we have
\begin{align*}
    & \delta = e^{-t \epsilon} \E e^{t X} \\ 
    \iff & e^{t \epsilon} = \frac{1}{\delta}\E e^{t X} \\ 
    \iff & t \epsilon = \ln \frac{1}{\delta}\E e^{t X} \\ 
    \iff & \epsilon = \frac{1}{t} \qty[\ln\E e^{t X} + \ln\frac{1}{\delta}]\,.
\end{align*}
Thus, we obtain
\begin{equation*}
    \Prob\qty(X > \frac{1}{t} \qty[\ln\E e^{t X} + \ln\frac{1}{\delta}]) \leq \delta\,.
\end{equation*}
\end{proof}

\longtrue
\mainresult*

\begin{proof}[Proof of \cref{thm:main_results}]
We start by defining the set of sample-compressed predictors. Given a dataset $S \sim \calD^n$ and~$\calH$ a predictor set, we consider the following subset of $\calH$, that contains only sample-compressed predictors : 
\begin{equation*}
    \widehat{\calH}_{S} \coloneqq \left\{ \scriptR(S_{\bfi}, \sigma) | \bfi \in \scriptP(n), \sigma \in M(\bfi) \right\} \subseteq \calH.
\end{equation*}
Note that for any vector of indices $\bfi \in \scriptP(n)$ and any message $\sigma \in M(\bfi)$, when given 
 a dataset $S$, we obtain a predictor $\scriptR(S_{\bfi}, \sigma) \in \widehat{\calH}_{S}$. 
 
 For a specific pair $(\bfi, \sigma)$, let's study the value of $\Delta\qty(\hatL_{S_{\bfi^c}}(\scriptR(S_{\bfi}, \sigma)), \calL_{\calD}(\scriptR(S_{\bfi}, \sigma)))$, a realization of a random variable of mean
\begin{equation*}
    \E_{T \sim \calD^{n}} \Delta\qty(\hatL_{T_{\bfi^c}}(\scriptR(T_{\bfi}, \sigma)), \calL_{\calD}(\scriptR(T_{\bfi}, \sigma))) = \E_{T_{\bfi} \sim \calD^{\m}} \E_{T_{\bfi^c} \sim \calD^{n-\m}} \Delta\qty(\hatL_{T_{\bfi^c}}(\scriptR(T_{\bfi}, \sigma)), \calL_{\calD}(\scriptR(T_{\bfi}, \sigma))).
\end{equation*} 
With $\delta_{\bfi}^{\sigma} \in (0,1)$ and $t > 0$, using Chernoff's bound as stated in \cref{eq:chernoff}, we have
\begin{align*}
    &\Prob_{S \sim \calD^n} \qty(\Delta\qty(\hatL_{S_{\bfi^c}}(h_{\bfi}^{\sigma}), \calL_{\calD}(h_{\bfi}^{\sigma})) \leq \frac{1}{t} \qty[\ln \E_{T_{\bfi} \sim \calD^{\m}} \E_{T_{\bfi^c} \sim \calD^{n-\m}} e^{t\Delta\qty(\hatL_{T_{\bfi^c}}(\scriptR(T_{\bfi}, \sigma)), \calL_{\calD}(\scriptR(T_{\bfi}, \sigma)))} + \ln \frac{1}{\delta_{\bfi}^{\sigma}}]) \\ 
    &\geq 1-\delta_{\bfi}^{\sigma}\,.
\end{align*}
Thanks to the union bound, we get a bound that is valid for all pairs $(\bfi, \sigma)$ simultaneously,
\begin{align} 
&\Prob_{S \sim \calD^n} \qty(\forall \bfi \in \scriptP(n), \sigma \in M(\bfi) :\Delta\qty(\hatL_{S_{\bfi^c}}(h_{\bfi}^{\sigma}), \calL_{\calD}(h_{\bfi}^{\sigma})) \leq \frac{1}{t} \qty[\ln \E_{T_{\bfi} \sim \calD^{\m}} \E_{T_{\bfi^c} \sim \calD^{n-\m}} e^{t\Delta\qty(\hatL_{T_{\bfi^c}}(\scriptR(T_{\bfi}, \sigma)), \calL_{\calD}(\scriptR(T_{\bfi}, \sigma)))} + \ln \frac{1}{\delta_{\bfi}^{\sigma}}]) \notag \\
&\geq 1-\sum_{\bfi \in \scriptP(n)}\sum_{\sigma \in M(\bfi)}\delta_{\bfi}^{\sigma}.\label{eq:union_bound_chernoff}
\end{align}

    Given $\delta \in (0,1)$,  we set $\delta_{\bfi}^{\sigma} = \mqty(n \\ \m)^{-1} \zeta (\m)P_{M(\bfi)}(\sigma) \delta$ and we obtain
    \begin{align*}
        \sum_{\bfi \in I} \sum_{\sigma \in M(\bfi)} \delta_{\bfi}^{\sigma} &= \sum_{\bfi \in I} \sum_{\sigma \in M(\bfi)} \mqty(n \\ \m)^{-1} \zeta(\m) P_{M(\bfi)}(\sigma) \delta \\ 
        &\leq \sum_{\bfi \in I} \mqty(n \\ \m)^{-1} \zeta(\m) \delta \\
        &= \sum_{m=1}^n \sum_{\bfi \in I_m} \mqty(n \\ \m)^{-1} \zeta(\m) \delta\\
        &= \sum_{m=1}^n \zeta(\m) \delta \\ 
        & \leq \delta.
    \end{align*}
    Thus, we have $1-\sum_{\bfi \in I}\sum_{\sigma \in M(\bfi)}\delta_{\bfi}^{\sigma} \geq 1-\delta$. We substitute $\delta_{\bfi}^{\sigma}$ by $\smqty(n \\ \m)^{-1} \zeta (\m)P_{M(\bfi)}(\sigma) \delta$ and let  $t=n-\m$ in Equation~\eqref{eq:union_bound_chernoff} to finish the proof.
\end{proof}

Note that the proof of \cref{thm:main_results} relies on specific choices for the values of the variables $\delta_{\bfi}^{\sigma}$ and $t$ in Equation~\eqref{eq:union_bound_chernoff}. The next paragraphs discuss the rationales behind these choices.

\paragraph{The choice of $t=n-\m$.} To turn Equation~\eqref{eq:union_bound_chernoff} into a computable bound, one  needs to either compute or upper bound the following term:
    \begin{equation*}
        \E_{T_{\bfi} \sim \calD^{\m}} \E_{T_{\bfi^c} \sim \calD^{n-\m}} e^{t\Delta\qty(\hatL_{T_{\bfi^c}}(\scriptR(T_{\bfi}, \sigma)), \calL_{\calD}(\scriptR(T_{\bfi}, \sigma)))}.
    \end{equation*}
    As the set $T$ is a realization of a $n-\m$ datapoints, choosing $t=n-\m$  generally ensure that this term is bounded. Indeed, this is a requirement for the proofs of multiple results in the PAC-Bayesian theory.

\paragraph{The choice of $\delta_{\bfi}^{\sigma} = {n \choose \m}^{-1} \zeta (\m)P_{M(\bfi)}(\sigma) \delta$.}
Importantly, the value of $\delta_{\bfi}^{\sigma}$ needs to be defined independently of~$S$. Thus, choosing $\delta_{\bfi}^{\sigma}$ is equivalent to choosing the prior distributions $P_{\scriptP(n)}$ and $P_{M(\bfi)}$ in order to obtain 
\begin{equation}\label{eq:delta_i} 
\delta_{\bfi}^{\sigma}= P_{\scriptP(n)}(\bfi)P_{M(\bfi)}(\sigma) \delta.    
\end{equation}
    
    Consider the set $I_m = \{\bfi \in \scriptP(n) : \m = m\}$. As we have no information on which $\bfi \in I_m$ is likely to lead to a good compression set for the reconstruction $\scriptR$, we define a uniform distribution over all $\smqty(n \\ m)$ vectors in $I_m$, which gives a weight of $\smqty(n \\ m)^{-1} \forall \bfi \in I_m$. Now, we want consider all possible sizes of compression set 
    \begin{equation*}
        I = \bigcup_{k=0}^n I_k,
    \end{equation*}
    so we need to define a probability distribution over each set $I_k$. We could simply choose $\tfrac{1}{n+1}$, but the probabilities would tend very fast to zero when we consider a large number $n$ of compression set sizes. 
        It is a better choice, as discussed by \cite{marchand2005learning} in Section 5.2, to choose :
    \begin{equation*}
        \zeta(m) = \frac{6}{\pi^2(m+1)^2}, \quad \text{ for which } \sum_{m=0}^\infty \zeta(m) = 1.
    \end{equation*}
Then, Equation~\eqref{eq:delta_i} is applied with $P_{\scriptP(n)}(\bfi)=\smqty(n \\ \m)^{-1} \zeta(\m)$. To apply the theorem, one must also provide choice of prior distributions $P_{M(\bfi)}$ over the messages $M(\bfi)$  such that $\sum_{\sigma \in M(\bfi)} P_{M(\bfi)}(\sigma) \leq 1$ for all $\ibf\in \scriptP(n)$.

\subsection{Corollaries to the main result}

To prove most corollaries, we are going to need the following lemma.
\begin{lemma}[\citet{maurer2004note}, \citet{risk_bounds}]\label{lemma:real_valued_binomial}
   Let $X$ be any random variable with values in $[0,1]$ and expectation $\mu = \E (X)$. Denote $X$ the vector containing the results of $n$ independent realizations of $X$. Then, consider a Bernoulli random variable $X'$ ($\{0,1\}$-valued) of probability of success $\mu$. Denote $X' \in \{0,1\}^n$ the vector containing the results of $n$ independent realizations of $X'$. 

   If function $g : [0,1]^n \to \R$ is convex, then 
   \begin{equation*}
       \E[g(X)] \ \leq\ \E [g(X')]\,.
   \end{equation*}
\end{lemma}

We now prove our first corollary.
\catoni*
\begin{proof}[Proof of \cref{corr:catoni}]
The proof is divided in two steps: bounding $\Ecal_{\Delta_C}$ and rearranging the terms. Note that these both steps are common in the proofs of PAC-Bayesian literature.  Our proof mainly follow the one of \citet{risk_bounds}.

We start by bounding $\Ecal_{\Delta_C}$.  Let us introduce a random variable $X_{\bfi}^{\sigma}$ that follows a binomial distribution of $m$ trials with a probability of success $\calL_{\calD}(\scriptR(T_{\bfi}, \sigma))$, denoted $B(m, \calL_{\calD}(\scriptR(T_{\bfi}, \sigma)))$. We use \cref{lemma:real_valued_binomial} with $g(\cdot) = e^{m\Delta_C(\cdot, \calL_{\calD}(h))}$.
{\allowdisplaybreaks[4]
\begin{align*}
        \Ecal_{\Delta_C}(\bfi, \sigma) &= \E_{T_{\bfi} \sim \calD^{\m}} \E_{T_{\bfi^c} \sim \calD^{n-\m}} e^{(n-\m)\Delta_C\qty(\hatL_{T_{\bfi^c}}(\scriptR(T_{\bfi}, \sigma)), \calL_{\calD}(\scriptR(T_{\bfi}, \sigma)))} \\ 
        &\leq \E_{T_{\bfi} \sim \calD^{\m}} \E_{X_{\bfi}^{\sigma} \sim B(m, \calL_{\calD}(\scriptR(T_{\bfi}, \sigma)))} e^{(n-\m)\Delta_C\qty(\frac{1}{n-\m}X_{\bfi}^{\sigma}, \calL_{\calD}(\scriptR(T_{\bfi}, \sigma)))} \\ 
        &= \E_{T_{\bfi} \sim \calD^{\m}} \sum_{k=0}^{n-\m} \Prob_{X_{\bfi}^{\sigma} \sim B(m, \calL_{\calD}(\scriptR(T_{\bfi}, \sigma)))}(X_{\bfi}^{\sigma} = k)e^{(n-\m)\Delta_C\qty(\frac{k}{n-\m}, \calL_{\calD}(\scriptR(T_{\bfi}, \sigma)))} \\ 
        &= \E_{T_{\bfi} \sim \calD^{\m}} \sum_{k=0}^{n-\m} \mqty(n-\m \\ k) \qty(\calL_{\calD}(\scriptR(T_{\bfi}, \sigma)))^k \qty(1-\calL_{\calD}(\scriptR(T_{\bfi}, \sigma)))^{n-\m-k} e^{(n-\m)\Delta_C\qty(\frac{k}{n-\m}, \calL_{\calD}(\scriptR(T_{\bfi}, \sigma)))} \\ 
        &\leq \E_{T_{\bfi} \sim \calD^{\m}} \sup_{r \in [0,1]} \qty[\sum_{k=0}^{n-\m} \mqty(n-\m \\ k) \qty(r)^k \qty(1-r)^{n-\m-k} e^{(n-\m)\Delta_C\qty(\frac{k}{n-\m}, r)} ]\\   
        &= \sup_{r \in [0,1]} \qty[\sum_{k=0}^{n-\m} \mqty(n-\m \\ k) \qty(r)^k \qty(1-r)^{n-\m-k} e^{(n-\m)\Delta_C\qty(\frac{k}{n-\m}, r)}] \\   
        &= \sup_{r \in [0,1]} \qty[\sum_{k=0}^{n-\m} \mqty(n-\m \\ k) \qty(r)^k \qty(1-r)^{n-\m-k} \frac{e^{-Ck}}{[1-(1-e^{-C})r]^{n-\m}}] \\   
        &= \sup_{r \in [0,1]} \qty[\sum_{k=0}^{n-\m} \mqty(n-\m \\ k) \qty(re^{-C})^k \qty(1-r)^{n-\m-k} \frac{1}{[1-(1-e^{-C})r]^{n-\m}}] \\   
        &= \sup_{r \in [0,1]} \qty[\frac{[re^{-C} + (1-r)]^{n-\m}}{[1-(1-e^{-C})r]^{n-\m}}] = \sup_{r \in [0,1]} [1] = 1.
\end{align*}
}%
where the last line is derived using binomial theorem.

Now, we rearrange the terms:
{\allowdisplaybreaks[4]
\begin{align*}
   \Delta\qty(\widehat{\mathcal{L}}_{S_{\bfi^c}}(R(\mathbf{i},\sigma)), \mathcal{L}_D(R(\mathbf{i},\sigma))) \leq \frac{1}{n-\m}\qty[\log \mqty(n \\ \m) + \log\qty(\frac{1}{\zeta(\m)P_{M(\bfi)}(\sigma)\delta})] \\ 
   -\ln(1-\mathcal{L}_D(R(\mathbf{i},\sigma))(1-e^{-C})) - C\widehat{\mathcal{L}}_{S_{\bfi^c}}(R(\mathbf{i},\sigma)) \leq \frac{1}{n-\m}\qty[\log \mqty(n \\ \m) + \log\qty(\frac{1}{\zeta(\m)P_{M(\bfi)}(\sigma)\delta})] \\ 
   \ln(1-\mathcal{L}_D(R(\mathbf{i},\sigma))(1-e^{-C})) \geq -C\widehat{\mathcal{L}}_{S_{\bfi^c}}(R(\mathbf{i},\sigma)) - \frac{1}{n-\m}\qty[\log \mqty(n \\ \m) + \log\qty(\frac{1}{\zeta(\m)P_{M(\bfi)}(\sigma)\delta})] \\ 
   1-\mathcal{L}_D(R(\mathbf{i},\sigma))(1-e^{-C}) \geq \exp\qty(-C\widehat{\mathcal{L}}_{S_{\bfi^c}}(R(\mathbf{i},\sigma)) - \frac{1}{n-\m}\qty[\log \mqty(n \\ \m) + \log\qty(\frac{1}{\zeta(\m)P_{M(\bfi)}(\sigma)\delta})]) \\ 
   \mathcal{L}_D(R(\mathbf{i},\sigma))(1-e^{-C}) \leq 1-\exp\qty(-C\widehat{\mathcal{L}}_{S_{\bfi^c}}(R(\mathbf{i},\sigma)) - \frac{1}{n-\m}\qty[\log \mqty(n \\ \m) + \log\qty(\frac{1}{\zeta(\m)P_{M(\bfi)}(\sigma)\delta})]) \\ 
   \mathcal{L}_D(R(\mathbf{i},\sigma)) \leq \frac{1}{1-e^{-C}}\qty[1-\exp\qty(-C\widehat{\mathcal{L}}_{S_{\bfi^c}}(R(\mathbf{i},\sigma)) - \frac{1}{n-\m}\qty[\log \mqty(n \\ \m) + \log\qty(\frac{1}{\zeta(\m)P_{M(\bfi)}(\sigma)\delta})])]
\end{align*}
}
\end{proof}

\klbound*

\begin{proof}
    To prove this corollary, we need to bound $\Ecal_{\kl}$. In the PAC-Bayes literature, this was first done by \citet{langford2001bounds, seeger2002pac} and then improved by \citet{maurer2004note}. We restate the proof of the latter for completeness. 

    We use \cref{lemma:real_valued_binomial} with $g(\cdot) = e^{m\kl(\cdot, \calL_{\calD}(h))}$.
{\allowdisplaybreaks[4]
\begin{align*}
    \Ecal_{\kl}(\bfi, \sigma) &= \E_{T_{\bfi} \sim \calD^{\m}} \E_{T_{\bfi^c} \sim \calD^{n-\m}} e^{(n-\m)\kl\qty(\hatL_{T_{\bfi^c}}(\scriptR(T_{\bfi}, \sigma)), \calL_{\calD}(\scriptR(T_{\bfi}, \sigma)))} \\ 
    &\leq \E_{T_{\bfi} \sim \calD^{\m}} \E_{X_{\bfi}^{\sigma} \sim B(m, \calL_{\calD}(\scriptR(T_{\bfi}, \sigma)))} e^{(n-\m)\kl\qty(\frac{1}{n-\m}X_{\bfi}^{\sigma}, \calL_{\calD}(\scriptR(T_{\bfi}, \sigma)))} \\ 
    &= \E_{T_{\bfi} \sim \calD^{\m}} \sum_{k=0}^{n-\m} \Prob_{X_{\bfi}^{\sigma} \sim B(m, \calL_{\calD}(\scriptR(T_{\bfi}, \sigma)))}(X_{\bfi}^{\sigma} = k)e^{(n-\m)\kl\qty(\frac{k}{n-\m}, \calL_{\calD}(\scriptR(T_{\bfi}, \sigma)))} \\ 
    &= \E_{T_{\bfi} \sim \calD^{\m}} \sum_{k=0}^{n-\m} \mqty(n-\m \\ k) \qty(\calL_{\calD}(\scriptR(T_{\bfi}, \sigma)))^k \qty(1-\calL_{\calD}(\scriptR(T_{\bfi}, \sigma)))^{n-\m-k} e^{(n-\m)\kl\qty(\frac{k}{n-\m}, \calL_{\calD}(\scriptR(T_{\bfi}, \sigma)))} \\ 
    &\leq \E_{T_{\bfi} \sim \calD^{\m}} \sup_{r \in [0,1]} \qty[\sum_{k=0}^{n-\m} \mqty(n-\m \\ k) \qty(r)^k \qty(1-r)^{n-\m-k} e^{(n-\m)\kl\qty(\frac{k}{n-\m}, r)} ]\\   
    &= \sup_{r \in [0,1]} \qty[\sum_{k=0}^{n-\m} \mqty(n-\m \\ k) \qty(r)^k \qty(1-r)^{n-\m-k} e^{(n-\m)\kl\qty(\frac{k}{n-\m}, r)}] \\   
    &= \sup_{r\in[0,1]}\qty[\sum_{k=0}^{n-\m} \mqty(n-\m \\ k) \qty(r)^k \qty(1-r)^{n-\m-k} \times e^{(n-\m) \qty(\frac{k}{n-\m} \ln (\frac{k}{n-\m}\cdot\frac{1}{r}) + (1-\frac{k}{n-\m})\ln ((1-\frac{k}{n-\m})\cdot \frac{1}{1-r})) }] \\
    &= \sup_{r\in[0,1]}\qty[\sum_{k=0}^{n-\m} \mqty(n-\m \\ k) \qty(r)^k \qty(1-r)^{n-\m-k} \times e^{k \ln (\frac{k}{n-\m}\cdot\frac{1}{r}) + (n-\m-k)\ln ((1-\frac{k}{n-\m})\cdot \frac{1}{1-r}) }] \\
    &= \sup_{r\in[0,1]}\qty[\sum_{k=0}^{n-\m} \mqty(n-\m \\ k) \qty(r)^k \qty(1-r)^{n-\m-k} \times e^{
    \ln \qty(\frac{k}{n-\m})^k + \ln \qty(\frac{1}{r})^k 
    + \ln \qty(1-\frac{k}{n-\m})^{n-\m-k} + \ln \qty(\frac{1}{1-r})^{n-\m-k}}]\\
    &= \sup_{r\in[0,1]}\qty[\sum_{k=0}^{n-\m} \mqty(n-\m \\ k) \qty(r)^k \qty(1-r)^{n-\m-k} \times \frac{1}{(r)^k (1-r)^{n-\m-k}} \qty(\frac{k}{n-\m})^k \qty(1-\frac{k}{n-\m})^{n-\m-k}]\\
    &=\sup_{r\in[0,1]}\qty[\sum_{k=0}^{n-\m} \mqty(n-\m \\ k) \qty(\frac{k}{n-\m})^k \qty(1-\frac{k}{n-\m})^{n-\m-k}]\\
    &\leq e^{\frac{1}{12(n-\m)}}\sqrt{\frac{\pi (n-\m)}{2}} + 2 \\
    &\leq 2\sqrt{n-\m}.  
\end{align*}
}%
The last two inequalities were proven by \citet{maurer2004note} for $n-\m \geq 8$. As noticed afterward by \cite{risk_bounds}, it can be verified numerically that $\Ecal_{\kl}(\bfi, \sigma) \leq 2\sqrt{n-\m}$ also holds for $1 \leq n-\m < 8.$.
\end{proof}

\linearloss*
\begin{proof}
We assume that the loss $\ell$ is $\varsigma^2$-sub-Gaussian, which is defined as
\begin{equation*}
\E_{(\bx,y) \sim \calD} \exp\qty[\lambda \qty(\ell(h,\bx,y) - \E_{(\bx', y') \sim \calD} \ell(h,\bx',y'))] \leq \exp\qty(\frac{\lambda^2\varsigma^2}{2}).
\end{equation*}

Then, we have
{\allowdisplaybreaks[4]
\begin{align*}
\Ecal_{\Delta_{\lambda}}(\bfi, \sigma) &= \E_{T_{\bfi} \sim \calD^{\m}} \E_{T_{\bfi^c} \sim \calD^{n-\m}} \exp\qty[(n-\m)\Delta_{\lambda}\qty(\hatL_{T_{\bfi^c}}(\scriptR(T_{\bfi}, \sigma)), \calL_{\calD}(\scriptR(T_{\bfi}, \sigma)))] \\ 
&= \E_{T_{\bfi} \sim \calD^{\m}} \E_{T_{\bfi^c} \sim \calD^{n-\m}} \exp\qty[(n-\m)\lambda\qty(\calL_{\calD}(\scriptR(T_{\bfi}, \sigma))-\hatL_{T_{\bfi^c}}(\scriptR(T_{\bfi}, \sigma)))] \\ 
&= \E_{T_{\bfi} \sim \calD^{\m}} \E_{T_{\bfi^c} \sim \calD^{n-\m}} \exp\qty[(n-\m)\lambda\qty(\E_{(\bx, y) \sim \calD}\ell(\scriptR(T_{\bfi}, \sigma), \bx, y)-\frac{1}{n-\m} \sum_{i=1}^{n-\m} \ell(\scriptR(T_{\bfi}, \sigma), \bx_i, y_i))] \\ 
&= \E_{T_{\bfi} \sim \calD^{\m}} \E_{T_{\bfi^c} \sim \calD^{n-\m}} \exp\qty[\lambda\sum_{i=1}^{n-\m}\qty(\E_{(\bx, y) \sim \calD}\ell(\scriptR(T_{\bfi}, \sigma), \bx, y)-\ell(\scriptR(T_{\bfi}, \sigma), \bx_i, y_i))] \\ 
&= \E_{T_{\bfi} \sim \calD^{\m}} \E_{T_{\bfi^c} \sim \calD^{n-\m}} \exp\qty[-\lambda\sum_{i=1}^{n-\m}\qty(\ell(\scriptR(T_{\bfi}, \sigma), \bx_i, y_i) - \E_{(\bx, y) \sim \calD}\ell(\scriptR(T_{\bfi}, \sigma), \bx, y))] \\ 
&= \E_{T_{\bfi} \sim \calD^{\m}} \E_{T_{\bfi^c} \sim \calD^{n-\m}} \prod_{i=1}^{n-\m} \exp\qty[-\lambda\qty(\ell(\scriptR(T_{\bfi}, \sigma), \bx_i, y_i) - \E_{(\bx, y) \sim \calD}\ell(\scriptR(T_{\bfi}, \sigma), \bx, y))] \\ 
&= \E_{T_{\bfi} \sim \calD^{\m}} \prod_{i=1}^{n-\m} \E_{(\bx_i, y_i) \sim \calD}  \exp\qty[-\lambda\qty(\ell(\scriptR(T_{\bfi}, \sigma), \bx_i, y_i) - \E_{(\bx, y) \sim \calD}\ell(\scriptR(T_{\bfi}, \sigma), \bx, y))] \numberthis \label{eq:iid_assumption}\\ 
&\leq \E_{T_{\bfi} \sim \calD^{\m}} \prod_{i=1}^{n-\m} \exp\qty(\frac{\lambda^2 \varsigma^2}{2}) \numberthis \label{eq:subgaussian} \\ 
&= \exp\qty(\frac{(n-\m)\lambda^2 \varsigma^2}{2}).
\end{align*}
}%

Equation~\eqref{eq:iid_assumption} relies on the \emph{i.i.d.} assumption and Equation~\eqref{eq:subgaussian}, on the $\varsigma^2$-sub-Gaussian assumption.

We replace the comparator function in Theorem~\ref{thm:main_results} and bound the cumulant generating function $\Ecal_{\Delta_{\lambda}}$ to finish the proof.
\end{proof}

\subsection{Behavior with zero error}\label{app:consistent_case}

Let us recall the definition of the binomial tail,  
\begin{equation*}
    \textrm{Bin}(k, m, r) = \sum_{i=0}^{k} \mqty(m \\ i) r^i (1-r)^{m-i},
\end{equation*}
and the binomial tail inversion,
\begin{equation*}
    \overline{\textrm{Bin}}(k,m,\delta) = \argsup_{r \in [0,1]} \left\{ \textrm{Bin}(k, m, r) \geq \delta \right\}.
\end{equation*}

The proofs of \cref{thm:consistent_case_catoni} and \cref{thm:consistent_case_kl} make use of the following lemma.

\begin{lemma}\label{lemma:binom_analytical}
    In the consistent case, the binomial tail has the following analytical solution:
    \begin{equation*}
        \overline{\mathrm{Bin}}(0,m,\delta) = 1-\exp\qty(-\frac{1}{m}\ln\qty(\frac{1}{\delta})).
    \end{equation*}
\end{lemma}

\begin{proof}
We start by rewriting the binomial tail distribution, with $k = 0$.
\begin{align*}
    \textrm{Bin}(0, m, r) &= \sum_{i=0}^{0} \mqty(m \\ i) r^i (1-r)^{m-i} \\
    &= \mqty(m \\ 0) r^0 (1-r)^{m-0} \\
    &= 1 \cdot 1 \cdot (1-r)^m \\
    &= (1-r)^m.
\end{align*}
We now rewrite the binomial tail inversion.
\begin{align*}
\overline{\textrm{Bin}}(0,m,\delta) &= \argsup_{r \in [0,1]} \left\{ \textrm{Bin}(0, m, r) \geq \delta \right\} \\ 
&= \argsup_{r \in [0,1]} \left\{ (1-r)^m \geq \delta \right\} \\ 
&= \argsup_{r \in [0,1]} \left\{ 1-r \geq \delta^{\frac{1}{m}} \right\} \\ 
&= \argsup_{r \in [0,1]} \left\{ 1-\delta^{\frac{1}{m}} \geq r \right\} \\ 
&= 1-\delta^{\frac{1}{m}} \numberthis \label{eq:argsup_eq}\\ 
&= 1-\exp\qty(\ln\qty(\delta^{\frac{1}{m}})) \\ 
&= 1-\exp\qty(-\frac{1}{m}\ln\qty(\frac{1}{\delta})) 
\end{align*}
At \cref{eq:argsup_eq}, we use the fact that the maximum value of $r$ such that $1-\delta^{\frac{1}{m}}\geq r$ is simply $1-\delta^{\frac{1}{m}}= r$.
\end{proof}
We did not find any mention of \cref{lemma:binom_analytical}'s equality in the literature. However, the following inequality is well known and can be found in papers such as \citet{shah_margin-sparsity_2005}:
\begin{equation*}
    \overline{\textrm{Bin}}(k,m,\delta) \leq 1-\exp\qty(\frac{-1}{m-k} \qty[\ln\mqty(m \\ k) + \ln \qty(\frac{1}{\delta})]).
\end{equation*}
When $k=0$, it reduces to 
\begin{equation*}
    \overline{\textrm{Bin}}(0,m,\delta) \leq 1-\exp\qty(\frac{-1}{m}  \ln \qty(\frac{1}{\delta})).
\end{equation*}

We now prove \cref{thm:consistent_case_catoni}.
\consistentcasecatoni*
\begin{proof}
    From \cref{lemma:binom_analytical}, we have 
    \begin{equation*}
        \overline{\textrm{Bin}}(0,m,\delta) = 1-\exp\qty(\frac{-1}{m}  \ln \qty(\frac{1}{\delta})).
    \end{equation*}
    Thus, we will prove the two following equations : 
    \begin{equation}\label{eq:first_thm8}
        1-\exp\qty(-\frac{1}{m} \ln \frac{1}{\delta}) = \lim_{C \to \infty}\left\{ \frac{1}{1-e^{-C}} \qty[1-\exp\qty(-\frac{1}{m} \ln \frac{1}{\delta})]\right\}
    \end{equation}
    and 
    \begin{equation}\label{eq:second_thm8}
    \lim_{C \to \infty}\left\{ \frac{1}{1-e^{-C}} \qty[1-\exp\qty(-\frac{1}{m} \ln \frac{1}{\delta})]\right\}  = \inf_{C > 0} \left\{\frac{1}{1-e^{-C}} \qty[1-\exp\qty(-\frac{1}{m} \ln \frac{1}{\delta})]\right\}
    \end{equation}

To  prove \cref{eq:first_thm8}, we simply compute
    \begin{align*}
        \lim_{C \to \infty} \frac{1}{1-e^{-C}} \qty[1-\exp\qty(-\frac{1}{m} \ln \frac{1}{\delta})] &= \qty[1-\exp\qty(-\frac{1}{m} \ln \frac{1}{\delta})] \lim_{C \to \infty} \frac{1}{1-e^{-C}} \\ 
        &= \qty[1-\exp\qty(-\frac{1}{m} \ln \frac{1}{\delta})] \cdot 1 \\
        &= 1-\exp\qty(-\frac{1}{m} \ln \frac{1}{\delta})
    \end{align*}

Let's now prove \cref{eq:second_thm8}. We know that 
\begin{equation*}
    \inf_{C > 0} \left\{\frac{1}{1-e^{-C}} \qty[1-\exp\qty(-\frac{1}{m} \ln \frac{1}{\delta})]\right\} = \qty[1-\exp\qty(-\frac{1}{m} \ln \frac{1}{\delta})] \inf_{C > 0} \left\{\frac{1}{1-e^{-C}} \right\}
\end{equation*}
We obtain that function $f(C) = \frac{1}{1-e^{-C}}$ is decreasing on $[0,\infty)$, and thus has a minimum at $C \to \infty$, by the fact that its derivative is negative for all $C\geq 0$: 
\begin{align*}
    f'(C) &= \dv{C}\frac{1}{1-e^{-C}} = \dv{C}\qty(1-e^{-C})^{-1} \\ 
    &= \frac{-1}{\qty(1-e^{-C})^{2}} \dv{C}\qty(1-e^{-C}) \\ 
    &= \frac{-1}{\qty(1-e^{-C})^{2}} e^{-C} \\
    &= (-1)  \frac{e^{-C}}{\qty(1-e^{-C})^{2}} .
\end{align*}
\end{proof}
Note that if one tries to find a value of $C \in \R_{>0}$ such that the derivative $f'(C)$ of the above proof is zero, they obtain
\begin{align*}
    f'(C) = (-1)  \frac{e^{-C}}{\qty(1-e^{-C})^{2}} &= 0 \\ 
    \iff  e^{-C} &= 0.
\end{align*}
There is no $C \in \R_{>0}$ such that $e^{-C} = 0$, however we know that $\lim_{C \to \infty} e^{-C} = 0$. 

Thus, as the function is monotonically decreasing when $C$ increases, given an arbitrarily small $\epsilon >0$, there is always a $C(\epsilon)$ large enough such that 
\begin{equation*}
    \frac{1}{1-e^{-C(\epsilon)}}-1 \leq \epsilon.
\end{equation*}
For example, with $\epsilon = 0.01$ and $C(\epsilon)=4.616$, we have $\frac{1}{1-e^{-C(\epsilon)}}-1=0.00999 \leq 0.01$. With $\epsilon=10^{-5}$ and $C(\epsilon) = 11.513$, we have $\frac{1}{1-e^{-C(\epsilon)}}-1=0.999\times 10^{-5} \leq 10^{-5}$. Thus, it is possible to be arbitrarily tight to $1$, for any $\epsilon > 0$.

\bigskip

We now prove \cref{thm:consistent_case_kl}.
\consistentcasekl*

\begin{proof}
To prove this result, we need to prove the following sequence of equations:
\begin{align}
    \overline{\textrm{Bin}}(0,m,\delta) &= \kl^{-1}\qty(0, \frac{1}{m}\ln \frac{1}{\delta}) \label{eq:first_thm9}\\
    &\leq  \kl^{-1}\qty(0, \frac{1}{m} \ln \frac{2\sqrt{m}}{\delta}) \label{eq:second_thm9}\\
    &=\kl^{-1}\qty(0, \frac{1}{m} \ln \frac{1}{\delta}) + K(m,\delta).\label{eq:third_thm9}
\end{align}

We first prove \cref{eq:first_thm9}. We already know that
\begin{equation*}
    \overline{\textrm{Bin}}(0,m,\delta) = \inf_{C > 0} \left\{\frac{1}{1-e^{-C}} \qty[1-\exp\qty(-\frac{1}{m} \ln \frac{1}{\delta})]\right\}.
\end{equation*}

Moreover, from the proof of Theorem~3 of \citet{letarte2019dichotomize}, for any constant $A > 0$, we know that 
\begin{equation}\label{eq:gael}
    \inf_{C > 0} \left\{\frac{1}{1-e^{-C}} \qty[1-\exp\qty(-C \qty[\frac{k}{m}]-\frac{1}{m} \ln \frac{A}{\delta})]\right\} = \kl^{-1}\qty(\frac{k}{m}, \frac{1}{m} \ln \frac{A}{\delta}).
\end{equation}

Thus, with $A=1$, we have 
\begin{equation*}
    \overline{\textrm{Bin}}(0,m,\delta) = \kl^{-1}\qty(0, \frac{1}{m} \ln \frac{1}{\delta}).
\end{equation*}

We now prove \cref{eq:second_thm9}.

The function $\kl(0,p)$ is monotonically increasing and $\kl(0,1) = \infty$. Thus, there exists a value $p^* = \kl^{-1}\qty(0, \frac{1}{m} \ln \frac{1}{\delta})$ such that $\kl(0,p^*) = \frac{1}{m} \ln \frac{1}{\delta}$. Moreover, there exists a value $p^\dagger = \kl^{-1}\qty(0, \frac{1}{m} \ln \frac{2\sqrt{m}}{\delta})$ such that $\kl(0,p^{\dagger}) = \frac{1}{m} \ln \frac{2\sqrt{m}}{\delta}$. As $\kl(0,p)$ is monotonically increasing and 
\begin{equation*}
    \kl(0,p^*) = \frac{1}{m} \ln \frac{1}{\delta} \leq \frac{1}{m} \ln \frac{2\sqrt{m}}{\delta} = \kl(0,p^{\dagger}),
\end{equation*}
then $p^* \leq p^{\dagger}$.

Finally, we prove \cref{eq:third_thm9} and show that $K(m,\delta)$ tends to $0$ when $m$ tends to $\infty$ and is bounded by
\begin{equation*}
    0\leq K(m,\delta) \leq \frac{1}{m}\ln\frac{2\sqrt{m}}{\delta}.
\end{equation*}

We start by defining the constant $K(m,\delta)$ as the gap between the two following terms: 
\begin{align*}
    &\kl^{-1}\qty(0, \frac{1}{m} \ln \frac{2\sqrt{m}}{\delta}) - \kl^{-1}\qty(0, \frac{1}{m} \ln \frac{1}{\delta}) \\ 
    =&  \qty[1-\exp\qty(-\frac{1}{m} \ln \frac{2\sqrt{m}}{\delta})] - \qty[1-\exp\qty(-\frac{1}{m} \ln \frac{1}{\delta})] \\ 
    =&\exp\qty(-\frac{1}{m} \ln \frac{1}{\delta}) - \exp\qty(-\frac{1}{m} \ln \frac{2\sqrt{m}}{\delta})\\ 
    \eqqcolon& K(m,\delta)
\end{align*}
In the second line, we use \cref{eq:gael} with both $A=2\sqrt{m}$ and $A=1$.
\end{proof}

We now highlight some properties of $K(m,\delta)$. First of all, we show that $K(m,\delta) \geq 0$ for $m \geq \tfrac{1}{4}$.
\begin{align*}
    & \exp\qty(-\frac{1}{m}\ln\frac{1}{\delta}) - \exp\qty(-\frac{1}{m}\ln\frac{2\sqrt{m}}{\delta}) \geq 0 \\ 
    \iff & \exp\qty(-\frac{1}{m}\ln\frac{1}{\delta}) \geq \exp\qty(-\frac{1}{m}\ln\frac{2\sqrt{m}}{\delta})  \\ 
    \iff & -\frac{1}{m}\ln\frac{1}{\delta} \geq -\frac{1}{m}\ln\frac{2\sqrt{m}}{\delta} \numberthis \label{eq:increasing_1} \\
    \iff & \frac{1}{m}\ln\frac{2\sqrt{m}}{\delta} \geq \frac{1}{m}\ln\frac{1}{\delta} \\
    \iff & 2\sqrt{m} \geq 1 \numberthis \label{eq:increasing_2} \\ 
    \iff & m \geq \frac{1}{4} 
\end{align*}
In \cref{eq:increasing_1} and \cref{eq:increasing_2}, we use the fact that both the exponentials and logarithms are increasing functions. From this, we also know that $K(\tfrac{1}{4}, \delta) = 0.$ As the parameter $m$ is the size of a dataset, we know that we always have $m \geq 1$ and thus $K(m,\delta) \geq 0$.

Secondly, we show that $K(m,\delta)$ tends to $0$ when $m$ tends to $\infty$. 
\begin{align*}
    \lim_{m \to \infty} K(m,\delta) &= \lim_{m \to \infty}\left\{\exp\qty(-\frac{1}{m} \ln \frac{1}{\delta}) - \exp\qty(-\frac{1}{m} \ln \frac{2\sqrt{m}}{\delta})\right\} \\ 
    &= \lim_{m \to \infty}\left\{\exp\qty(-\frac{1}{m} \ln \frac{1}{\delta})\right\} - \lim_{m \to \infty} \left\{\exp\qty(-\frac{1}{m} \ln \frac{2\sqrt{m}}{\delta})\right\} \\ 
    &= 1 - 1 = 0.
\end{align*}

Next, we compute a simple upper bound of $K(m,\delta)$, that also tends to $0$ when $m \to \infty$.
\begin{align*}
    K(m,\delta) &=\exp\qty(-\frac{1}{m} \ln \frac{1}{\delta}) - \exp\qty(-\frac{1}{m} \ln \frac{2\sqrt{m}}{\delta}) \\
    &\leq 1 - \exp\qty(-\frac{1}{m} \ln \frac{2\sqrt{m}}{\delta}) \\
    &\leq \frac{1}{m} \ln \frac{2\sqrt{m}}{\delta}.
\end{align*}
The last line uses the inequality $1-e^{-x} \leq x$.

\end{document}